\documentclass[10pt,twocolumn,letterpaper]{article}
\usepackage[T1]{fontenc}
\usepackage{amsmath,amsthm}
\usepackage{newtxtext,newtxmath}
\usepackage[letterpaper,margin=0.75in,columnsep=0.25in]{geometry}
\usepackage{graphicx,booktabs,multirow,makecell,tabularx}
\usepackage[table]{xcolor}
\usepackage{algorithm}
\usepackage[noend]{algpseudocode}
\usepackage[numbers,sort&compress]{natbib}
\usepackage{microtype}
\usepackage[font=small,labelfont=bf,skip=4pt]{caption}
\usepackage{enumitem}
\usepackage{url}
\usepackage[colorlinks,linkcolor=linkblue,citecolor=linkblue,urlcolor=linkblue]{hyperref}

\definecolor{linkblue}{HTML}{1A4E8A}
\definecolor{c1}{HTML}{0072B2}
\definecolor{head}{HTML}{EAF1F7}
\definecolor{tint1}{HTML}{E3EEF7}
\definecolor{band2}{HTML}{F7F7F7}

\newtheorem{proposition}{Proposition}
\newtheorem{corollary}[proposition]{Corollary}
\DeclareMathOperator{\logit}{logit}
\setlist{nosep,leftmargin=*}

\title{AdaSpark: Adaptive DSpark with Online Learning\\for Tree Verification and N-gram Fill}
\author{Liquan Liu\\ {\small\texttt{liquan@zeraix.com}} \and
  Yifan Zhang\\ {\small\texttt{fergus@zeraix.com}} \and
  Bowei Xu\\ {\small\texttt{xbw112233@gmail.com}}}
\date{\textit{Zeraix}\enspace$\cdot$\enspace 5 October 2026}

\begin{document}
\twocolumn[
\begin{@twocolumnfalse}
\maketitle
\begin{abstract}
Block drafters such as DSpark propose ranked candidates for several positions in one forward pass, and a tree verifier checks them in one pass of the target. The number of rows to verify trades the tokens a wider tree is expected to accept against the time a wider verify takes. Most schedulers that choose this number take the verify time from a table or model measured before serving, corrected online by at most one scale factor, and take acceptance from the drafter's confidence estimates or from a map fitted offline.

AdaSpark learns both quantities while it serves, with no profile, calibration or sweep in advance. It learns which verify widths are worth offering and fits each one's verify time as a function of context. It fits each candidate's acceptance probability to the target's verify outcomes, with the drafter's confidence head as one input, and orders and sizes the tree by that fit instead of by the head. The same model prices n-gram continuations of the request's own text, so drafted and text-derived candidates compete for rows in one best-first order. The width is chosen by pricing time at the long-run decode rate.

On single- and multi-turn conversations from six public datasets, on three dense targets and one mixture-of-experts target, AdaSpark decodes 1.5--3.1$\times$ faster than llama.cpp's DSpark with the same drafters. Our imparo engine with AdaSpark is 1.17--1.52$\times$ faster than imparo running with a three-token chain (the default llama.cpp setting); this gain comes from the scheduler alone. Without a width sweep, AdaSpark is never more than 0.3\% slower than the best pinned tree width on any dense target or context band. On the mixture-of-experts target it ties the best pinned width, and the other pinned widths from 4 to 16 rows are 5--14\% slower.

\smallskip
\noindent\textit{Keywords:} speculative decoding; tree verification; DSpark; online learning; verify cost; mixture of experts\\
\textit{Code and data:} \url{https://github.com/zeraix/imparo}
\end{abstract}
\vspace{1.5em}
\end{@twocolumnfalse}
]

\section{Introduction}\label{sec:1}

Speculative decoding \cite{leviathan2023fast,chen2023accelerating} drafts tokens cheaply and verifies them in one forward pass of the target model, keeping the longest prefix the target agrees with. Drafters that emit a \emph{tree} of candidates rather than a chain raise the tokens accepted per pass \cite{cai2024medusa,li2024eagle,miao2024specinfer,li2024eagleb,wang2025opt}. Block drafters such as DSpark \cite{cheng2026dspark} produce a whole block of positions in one forward pass, each with a ranked list of alternatives, so a tree built from one block can be as wide as the scheduler allows \cite{zhou2026treespark,li2026chains}. The scheduler must decide how many rows to verify.

That decision needs two quantities: the tokens a tree of $n$ rows is expected to accept, and the time a verify of $n$ rows takes. Many schedulers fix the budget by hand or by a sweep \cite{li2026chains,shen2026draft,liu2026talon}. Others learn a chain length online from measured outcomes, with no model of the verify's cost in width or context \cite{saremi2026adaptivesd,hou2025banditspec,sun2026delay}, or measure a few lengths in short trial phases \cite{saxena2025utility}. Those that price the verify read its time from a table, curve or model measured before serving: a profile taken at engine start \cite{cheng2026dspark,kwon2023efficient,pan2026making}, a table from an offline profiler \cite{zheng2024sglang}, a latency ladder measured once per deployment \cite{zhou2026treespark}, or a roofline model calibrated offline with at most one online correction factor \cite{oh2026bastion}. The two open-source serving engines that implement DSpark's scheduler each measure their table at one context length or one short range of contexts (\S\ref{sec:6}). Acceptance comes from the drafter's probabilities \cite{pan2026making,oh2026bastion}, its trained confidence head \cite{cheng2026dspark,huang2025specdec}, or a map fitted offline to the target's outcomes \cite{zhou2026treespark,wang2026taps}. Most of these schedulers already choose the size anew for every verify; they differ in where the cost and the acceptance come from (Table~\ref{tab:sizing}).

We measure the verify time on four targets (\S\ref{sec:3.2}). It is a step function of $n$, as reported for GPU tile allocation \cite{cheng2026dspark,oh2026bastion} and for llama.cpp's Metal backend \cite{chordiya2026lossless}, and the steps fall in different places for 8-bit and 4-bit weights. On a mixture-of-experts target it rises with every row, because each row is routed to its own experts \cite{pan2026making,saxena2025utility,huang2025moesd}. It also grows with context, faster for a wide verify than for a narrow one. The best fixed width therefore differs between targets (\S\ref{sec:3.4}). For acceptance, a model fitted to the target's own outcomes predicts better than the drafter's confidence head, as TreeSpark \cite{zhou2026treespark} found for a map fitted offline (\S\ref{sec:3.3}).

AdaSpark is a scheduler for DSpark tree verification that learns both quantities while it serves. It learns which verify widths to offer and fits each one's verify time as a function of context, fits an acceptance model to the target's outcomes, lets n-gram continuations of the request compete with the drafter's candidates for rows, and chooses each round's width from the two fitted models. It needs no cost profile, calibration or width sweep before serving. Its constants (Appendix~\ref{app:B}) were set once, on one target and a separate development corpus, and are used unchanged on all four.

\paragraph{Contributions.} 

\begin{enumerate}
\item An online verify-cost model that learns which widths to offer, measuring a new width only where it could change the choice. It prices each width as a function of context with a low-quantile loss, with no offline profile or calibration, and keeps a bank of samples per context band, so it fits every context it has seen although contexts arrive one request at a time (\S\ref{sec:3.2}).
\item An acceptance model fitted online to the target's outcomes: a multinomial over each parent's children that takes the drafter's confidence head as one input and replaces it in ordering and sizing the tree, with separate terms for n-gram candidates and for the evidence each source gives about the other (\S\ref{sec:3.3}).
\item N-gram candidates priced by the same model as the drafter's. Earlier methods that verify drafted and retrieved candidates in one tree divide the rows by fixed rules \cite{shen2026draft,quan2025rasd}; here the two sources compete for rows in one best-first order. Under the model, at a fixed width they cannot lower a verify's expected tokens, and they widen it only when the cost model says they pay for it (\S\ref{sec:3.3}).
\item A width rule for step-shaped, non-monotone costs that prices time at the long-run decode rate rather than at each round's own ratio, with the conditions under which it is optimal (\S\ref{sec:3.4}); on development prompts on LFM2.5-2.6B it ties the per-round ratio and widens trees at long context. Starting with nothing learned, it is never more than 0.3\% slower than the best pinned width on each dense target and ties it on the routed one (\S\ref{sec:3.4}, \S\ref{sec:4.5}).
\end{enumerate}

We implement AdaSpark in imparo, our inference engine, and test it against llama.cpp's DSpark with the same drafter files, on three dense targets and one mixture-of-experts target. The single- and multi-turn conversations are taken from six public datasets by a fixed rule; every reply runs to its end or to the 8,192-token cap, and every exclusion is counted. Appendix~\ref{app:A} gives the test set. The setup notes in the code repository (\texttt{dspark-paper/SETUP.md}) give the engine, the baseline, the targets, the machine, the arms, and how to reproduce every table and figure. An ablation on the two LFM2.5 targets adds the mechanisms in stages, and the gain over llama.cpp is split into the part the engine gives any algorithm and the part the scheduler adds (\S\ref{sec:4}).

\section{Background}\label{sec:2}

\subsection{DSpark drafters}\label{sec:2.1}

A DSpark drafter \cite{cheng2026dspark} is a small network that reads hidden states from selected layers of the target and proposes tokens for the $B$ positions after the last committed token, the \emph{anchor}. A parallel backbone produces logits for all $B$ positions in one forward pass; a lightweight sequential head (the \emph{Markov head}) then adds to each position's logits a low-rank bias from the token chosen at the position before. A \emph{confidence head} predicts, per position, the probability that the draft token survives verification given that the tokens before it did; in \cite{cheng2026dspark} it is trained on the acceptance rate of speculative sampling at temperature 1 and calibrated offline.

\subsection{Tree verification}\label{sec:2.2}

A verify round takes a tree of candidate tokens rooted at the anchor and runs the target once over all its nodes. Each node attends to the committed context and to its own ancestors. A verify of \emph{width} $n$ has $n$ rows: the anchor and $n - 1$ candidates. Under greedy decoding a candidate is accepted when its token equals the target's argmax at its parent's row. At most one child of an accepted node can match, so the accepted nodes form a single path from the anchor. The round commits that path and the target's argmax at the path's last node. A chain is the special case of a tree with one child per node.

At each position we keep the drafter's top $K = 8$ candidates: the top one is the drafter's \emph{pick}, the other $K - 1$ its \emph{siblings}. The candidates of one block form a tree of fixed shape (Figure~\ref{fig:shape}). The picks form a spine: the pick at position $i$ hangs from the pick at position $i -$~1, and the first from the anchor. The siblings at position $i$ hang from the same parent as the pick and are leaves: our engine runs the Markov head along the picks only, so it scores nothing below a sibling, and an accepted sibling ends the drafter's part of the round. PCTree \cite{li2026chains} and TreeSpark \cite{zhou2026treespark} give siblings children of their own by running the head once more for each expanded parent; we did not implement this. N-gram candidates (\S\ref{sec:3.3}) form chains that branch from the anchor and from each pick, and can run past the end of the drafter's block.

\begin{figure*}[t]
\centering
\includegraphics[width=0.8\textwidth]{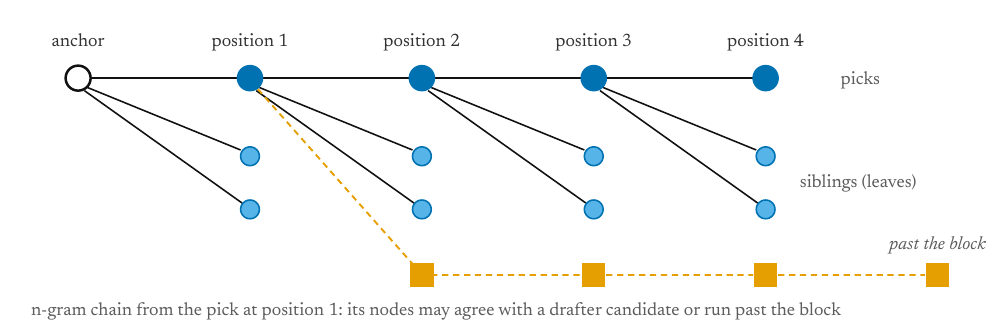}
\caption{Shape of the candidate tree, drawn for a block of $B = 4$ positions and $K = 3$ candidates per position. Dark blue nodes are the drafter's picks, light blue nodes its siblings, orange squares an n-gram chain. The targets here use $B = 7$ or 9 and $K = 8$; a verify takes the best-first prefix of these nodes (\S\ref{sec:3.1}).}
\label{fig:shape}
\end{figure*}

\section{AdaSpark}\label{sec:3}

\subsection{Overview}\label{sec:3.1}

Figure~\ref{fig:overview} shows one round and Algorithm~\ref{alg:round} states it. The drafter emits a block with its top-$K$ candidates per position, and n-gram chains from the request's own text are added to them. Each candidate gets $q$, the probability that it is accepted if its parent is. The candidates are ordered best-first by $Q$, the product of $q$ along the path from the anchor, so that every prefix of the order is itself a tree. The width chooser takes the prefix length that maximises Eq.~\eqref{eq:7}, the target verifies that prefix in one forward, and the round's measured time and accept outcomes update the two online models.

\begin{figure*}[t]
\centering
\includegraphics[width=0.8\textwidth]{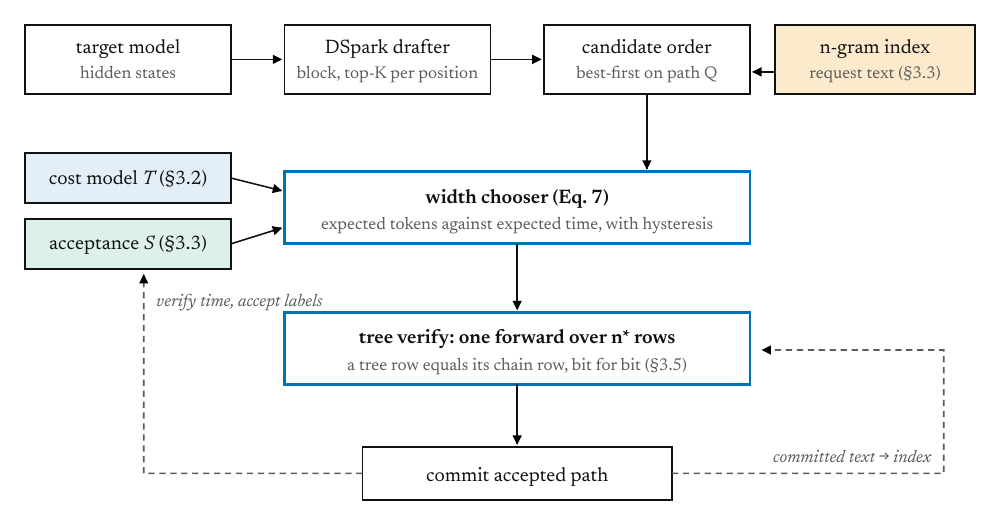}
\caption{One AdaSpark round. Solid arrows are the forward path; dashed arrows are the feedback that trains the two online models and extends the n-gram index. Both models persist across requests. The acceptance model's per-request intercept, and the per-request offset of the head-based estimate it competes with, restart with each request.}
\label{fig:overview}
\end{figure*}

\paragraph{One order serves every width.} A node can be accepted only if its parent is, and at most one child of an accepted node is accepted. The probability that candidate $j$ lies on the accepted path is therefore $Q_{j}$, and the expected number of accepted candidates of a tree is the sum of $Q$ over its nodes,

\begin{equation}\label{eq:1}
\begin{gathered}
S(n) = \sum_{j \in \text{first } n-1 \text{ nodes of } O} Q_j,\\
Q_j = \prod_{i \in \mathrm{path}(j)} q_i
\end{gathered}
\end{equation}

\begin{proposition}\label{prop:1}
Fix the round's $q$ for every candidate before the tree is chosen, and index the candidates so that every parent precedes its children. Order the candidates by decreasing $Q$, breaking ties toward the lower index. Then for every $n$, the first $n-1$ candidates of the order form a tree, and no tree of $n-1$ candidates has a larger $S$ under the model.
\end{proposition}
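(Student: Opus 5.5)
The plan is to prove the two claims separately. The first (every prefix is a tree) follows from monotonicity of $Q$ along paths together with the tie-breaking rule. The second (optimality) comes from relaxing the tree constraint: over arbitrary sets of $n-1$ candidates, the top $n-1$ values of $Q$ are the best possible, and the prefix is such a set.

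First I fix the setting. The candidate pool is a rooted forest hanging from the anchor: picks, siblings and n-gram chains. A \emph{tree of $n-1$ candidates} means a parent-closed set of $n-1$ candidates, where the anchor counts as always present. Each $q_i$ is a probability, so $q_i \in [0,1]$. For any candidate $c$ with parent $p$ (a candidate, not the anchor), $Q_c = Q_p\, q_c \le Q_p$. The order $O$ is the strict total order that sorts lexicographically by $(-Q_j, j)$. The hypothesis says every parent has a lower index than its children. So for a parent $p$ and child $c$, either $Q_p > Q_c$, or $Q_p = Q_c$ and $p < c$. In both cases $p$ precedes $c$ in $O$.

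Now take any prefix $P$ of $O$ and any $c \in P$ whose parent $p$ is a candidate. Since $p$ precedes $c$, $p \in P$. By induction up the path, every ancestor of $c$ is in $P$. Hence $P$ is parent-closed and is a tree for every $n$. This settles the first claim.

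For optimality, Eq.~\eqref{eq:1} gives $S$ for any tree $T$ as the plain sum $\sum_{j\in T} Q_j$. This uses the fact, established just before Eq.~\eqref{eq:1}, that $Q_j$ is the probability that $j$ lies on the accepted path, together with linearity of expectation. Because the $q$ are fixed before the tree is chosen, each $Q_j$ is a constant that does not depend on which other nodes are included. So $S(T)$ is an additive set function with nonnegative weights. Among \emph{all} subsets of size $n-1$, tree or not, the sum is maximised by taking the $n-1$ largest weights. The first $n-1$ elements of $O$ are exactly such a set; ties do not change the sum. Every tree of $n-1$ candidates is one of these subsets, so its $S$ is at most the prefix's $S$. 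The prefix is itself a tree by the first part, so it attains the maximum.

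The main obstacle, mild as it is, is the tie case. Without the index tie-break, a child with $q_c = 1$ (or with $Q_p = 0$) could be placed before its parent, and some prefix would then fail to be a tree. I would therefore state the parent-before-child property of the lexicographic order explicitly as a lemma before running the induction. I would also note that the argument needs $q_i \le 1$ and the fixed-$q$ hypothesis. If $q$ depended on the chosen tree, $S$ would no longer be additive, and the greedy-top-$k$ argument would fail.
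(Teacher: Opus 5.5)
Your proof is correct and follows the paper's own argument. Since $q\le 1$ gives $Q_c\le Q_p$, the index tie-break places every parent before its child, so each prefix is parent-closed; and the prefix holds the $n-1$ largest values of $Q$, so its $S$ bounds that of any set of $n-1$ candidates, trees included. You do not need nonnegativity of the weights: at fixed cardinality, taking the top $n-1$ values maximises the sum whatever their signs.
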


\begin{proof}
Since $q \leq$~1, a child's $Q$ is at most its parent's, and the parent has the lower index, so the tie rule places it first as well. Every candidate's parent therefore precedes it in the order, and every prefix is closed under taking parents, that is, a tree. The prefix consists of the $n - 1$ largest values of $Q$, so its sum is at least that of any other set of $n - 1$ candidates, trees included.
\end{proof}

This is the node-selection rule of EAGLE-2 \cite{li2024eagleb} and OPT-Tree \cite{wang2025opt}, and schedulers for block drafters and mixture-of-experts targets already use its prefix property to size trees \cite{pan2026making,oh2026bastion,zhou2026treespark}; we state it because the chooser relies on it at every width at once. The width decision reduces to choosing a prefix length, and one pass over the order gives $S(n)$ for every $n$. The engine computes each $q$ once per round over all of a parent's candidates, placed or not (Eq.~\eqref{eq:6}), as the proposition requires. Optimality holds under the model's $q$; \S\ref{sec:4.7} measures how well $q$ matches the target's outcomes.

\begin{algorithm*}[t]
\caption{One AdaSpark round. The widths offered to the chooser are the widths the cost model has learned (\S\ref{sec:3.2}). The head-based estimate is the drafter's confidence head with a per-request offset, and the guard decides which of it and the fitted model prices the tree (\S\ref{sec:3.3}); the incumbent is the width the chooser holds (\S\ref{sec:3.4}).}
\label{alg:round}
\small
\begin{algorithmic}[1]
\Require committed prefix $x$; drafter; target; cost model; acceptance model; n-gram index
\Ensure the tokens committed by this round
\State $\mathit{cands} \gets \mathrm{drafter.block}(\mathrm{target.hidden}(x))$ \Comment{$B$ positions, top-$K$ candidates each (\S\ref{sec:2.2})}
\State $\mathit{cands} \gets \mathit{cands} \cup \mathrm{ngram.chains}(x, \mathit{cands})$ \Comment{from the anchor and each pick (\S\ref{sec:3.3})}
\State $q_j \gets P(j \text{ accepted} \mid \text{parent accepted})$ \Comment{Eq.~\eqref{eq:6}, or the head-based estimate while the guard prefers it}
\State $O \gets \mathit{cands}$ by decreasing $Q_j = \prod q$ along $j$'s path \Comment{Proposition~\ref{prop:1}}
\If{a learned width has fewer than 3 measurements}
  \State $n^* \gets$ that width \Comment{cold start or a new split; the round still decodes}
\Else
  \State $n^* \gets \arg\max V(n)$, Eq.~\eqref{eq:7}, over the offered widths, kept at the incumbent unless beaten by the margin
\EndIf
\State $\mathit{tree} \gets$ the first $n^* - 1$ nodes of $O$, laid out depth-first
\State every 16th round with n-gram candidates: its last row goes to an unplaced n-gram node of the least-labelled bucket
\State $(\mathit{path}, t) \gets \mathrm{target.verify}(\mathit{tree})$ \Comment{one forward; $t$ is the measured verify time}
\State commit $\mathit{path}$ and the target's argmax after it
\State cost model (not in a request's first round): fold $t$ into $\mathrm{class}(n^*)$ at context $|x|$; fold the round's fixed cost $D$; split a class if a width inside could win
\State rate $\rho$ of the context band of $|x|$; gains $g_d, g_n$ from the rows above $n_0$, if $n^* > n_0$; the pair below $n_0$ from the rows under $\min(n^*, n_0)$
\State acceptance model: guard, then learn from every child of every accepted parent; request offset and intercept
\State $\mathrm{ngram.extend}(\mathit{path})$
\end{algorithmic}
\end{algorithm*}

\subsection{Online verify-cost model}\label{sec:3.2}

\paragraph{What a verify costs.} A verify of $n$ rows streams every weight of the target at least once, whatever $n$ is, and multiplies $n$ rows of activations by each weight matrix. The first cost is why speculation pays: on a dense target sixteen rows stream the same weights as one. The second is not linear in $n$. For a few rows the engine runs row kernels that read each weight once for all rows, up to a limit that depends on the weight format. Above it a matrix kernel computes whole tiles: at $n$ rows it runs one geometry $g(n)$ from a small set, chosen per device by the engine's tuner, pads $n$ up to that geometry's row count $\kappa(g)$, and may walk the weights more than once. Qualitatively,

\begin{equation}\label{eq:2}
T(n, c) \approx W\bigl(g(n)\bigr) + \gamma\,\kappa\bigl(g(n)\bigr) + \mathrm{Att}(n, c)
\end{equation}

where $W(g)$ is the time to stream the weights under geometry $g$, $\gamma$ the matrix time of one padded row, and $\mathrm{Att}$ the attention over $c$ tokens of context. $W$ and $\kappa$ are constant on intervals of $n$, so $T$ is nearly flat inside each interval and steps between them; we call each such interval a \emph{row class}. The steps need not be monotone: the tuner ranks geometries at the widths it measures, and a geometry fastest for a full tile can be slower for a padded one. Such steps have been reported for CUDA tile allocation \cite{cheng2026dspark,oh2026bastion} and for llama.cpp's Metal kernels on an Apple laptop of our machine's generation, where a 9-token verify was cheaper than a 7-token one because a matrix-matrix kernel replaced per-position matrix-vector products \cite{chordiya2026lossless}.

On a mixture-of-experts layer each row is routed to its own experts, so a verify of $n$ rows streams up to $n$ times the expert weights of one row. $W$ then depends on $n$ itself, and time rises inside a class as well as between classes. Earlier work models this cost by the number of distinct experts a verify touches \cite{pan2026making,saxena2025utility,huang2025moesd,mcdanel2026moe}. That number is bounded by the expert count (32 on our routed target, 4 per row), so expert traffic must level off once the rows together touch most experts. We did not separate expert traffic from the kernels' own steps; the verify time kept rising up to 48 rows, the widest we measured (Figure~\ref{fig:verify-cost}).

\paragraph{Measured verify time.} We ran DSpark decoding with the tree width pinned at each $n$ of 2, 4, 6, 8, 10, 12, 16, 20, 24, 32, 40 and 48 rows, at three contexts (443, 1,596 and 8,444 tokens) in two passes of opposite order, on prompts that continue public text (setup notes), and took the median verify time over the rounds that verified exactly $n$ rows, after the first two (Figure~\ref{fig:verify-cost}).

\begin{figure*}[t]
\centering
\includegraphics[width=0.8\textwidth]{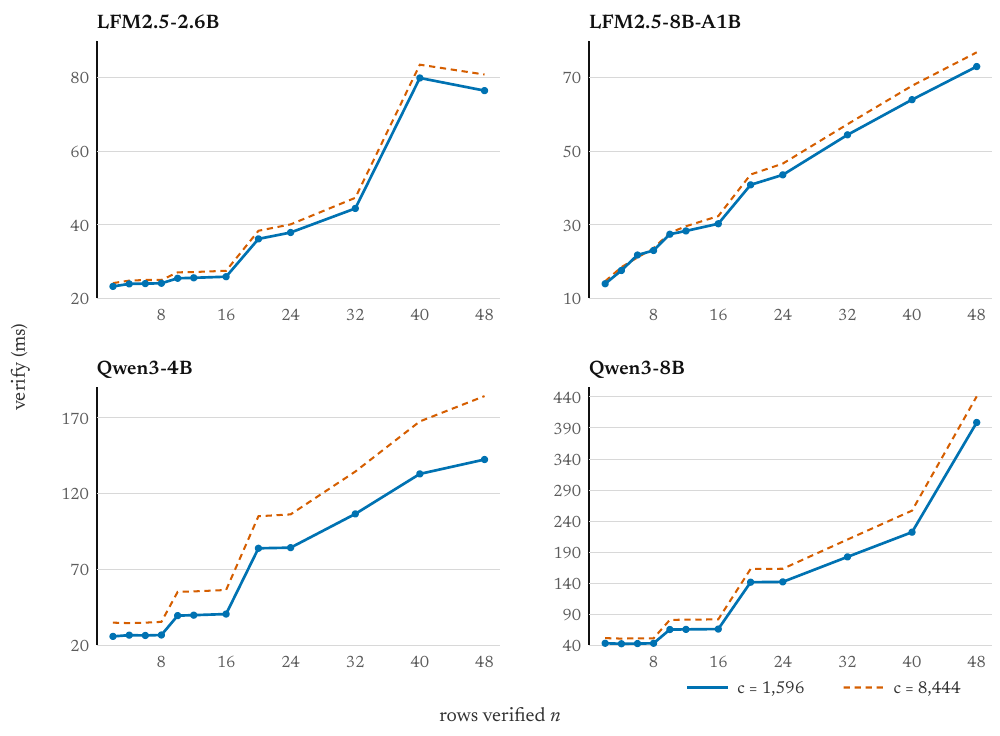}
\caption{Verify time against the number of rows verified, on each target, at two context lengths (median over two passes of the per-run median over rounds that verified exactly $n$ rows). Each panel has its own vertical scale. T(16)/T(2) at 1,596 tokens: LFM2.5-2.6B 1.11, LFM2.5-8B-A1B 2.16, Qwen3-4B 1.57, Qwen3-8B 1.53.}
\label{fig:verify-cost}
\end{figure*}

The dense targets show the steps of Eq.~\eqref{eq:2}, in different places. On LFM2.5-2.6B, whose weights are Q8\_0 (8-bit), a 16-row verify costs 11\% more than a 2-row one at a 1,596-token context, and a 20-row verify 40\% more than a 16-row one. Further out the curve is not monotone: a 48-row verify, which runs a different geometry, is 4.3\% cheaper than a 40-row one. On Qwen3-4B, whose weights are Q4\_K\_M (mostly 4-bit), the verify is flat to 8 rows and then steps steeply: at 1,596 tokens a 10-row verify costs 1.48 times an 8-row one, and a 20-row verify 2.07 times a 16-row one. Qwen3-8B, also Q4\_K\_M, has the same steps (1.51 and 2.14 times). The 4-bit matrix kernel processes rows in fragments of 8: up to 8 rows run in one fragment, close to the memory bandwidth, and from 9 rows in two fragments that reach about two thirds of that throughput. On the routed target time rises with every row, each routed to its own four experts: 8 rows cost 1.64 times as much as 2 rows, and 16 rows 2.16 times.

\paragraph{Context dependence.} A verify also slows as the conversation grows, because every verified row attends over every earlier token, and a wide verify slows faster. On Qwen3-4B, where every layer attends, a 16-row verify gains 2.4~ms per 1,000 tokens of context and an 8-row verify 1.3~ms, so the extra 8 rows cost 12~ms at 443 tokens and 21~ms at 8,444. On the two LFM2.5 targets most layers are short convolutions and only a few attend: the 8- and 16-row classes gain at most 0.2~ms per 1,000 tokens and the 32-row class under 0.45~ms. One constant per class is therefore not enough. Fitting

\begin{equation}\label{eq:3}
T_k(c) = \alpha_k + \beta_k\, c
\end{equation}

per class $k$ over the three contexts gives the intercepts and slopes of Table~\ref{tab:context-law}. The intercept $\alpha_{k}$ is the context-free part: the weight pass and the tile steps. The slope $\beta_{k}$ is the attention's time per token of context, and it grows with the class's row count, so a slope shared by all classes under-prices the wide classes at long context, which is where the ranking of widths changes.

\begin{table*}[t]
\centering\small
\caption{Verify time (ms) at three contexts for three row counts on each target, and the fit $T(c) = \alpha + \beta\cdot c$ of Eq.~\eqref{eq:3}.}
\label{tab:context-law}
\setlength{\tabcolsep}{4pt}
\begin{tabular}{lrrrrrrr}
\toprule
\rowcolor{head}\textbf{target} & \textbf{rows} & \textbf{$T$(443)} & \textbf{$T$(1,596)} & \textbf{$T$(8,444)} & \textbf{$\alpha$ (ms)} & \textbf{$\beta$ (ms per 1k tokens)} & \textbf{max residual} \\
\midrule
LFM2.5-2.6B & 8 & 24.0 & 24.1 & 25.0 & 23.9 & 0.13 & 0.1\% \\
LFM2.5-2.6B & 16 & 26.0 & 25.9 & 27.5 & 25.8 & 0.20 & 0.7\% \\
LFM2.5-2.6B & 32 & 43.9 & 44.4 & 47.3 & 43.7 & 0.43 & 0.0\% \\
LFM2.5-8B-A1B & 8 & 24.2 & 23.0 & 23.7 & 23.6 & $-$0.01 & 2.7\% \\
LFM2.5-8B-A1B & 16 & 32.3 & 30.3 & 32.4 & 31.3 & 0.11 & 3.9\% \\
LFM2.5-8B-A1B & 32 & 54.8 & 54.4 & 57.3 & 54.3 & 0.35 & 0.8\% \\
Qwen3-4B & 8 & 25.2 & 26.8 & 35.5 & 24.7 & 1.28 & 0.2\% \\
Qwen3-4B & 16 & 37.6 & 40.6 & 56.4 & 36.7 & 2.35 & 0.4\% \\
Qwen3-4B & 32 & 102.5 & 106.6 & 134.5 & 100.5 & 4.03 & 0.3\% \\
Qwen3-8B & 8 & 41.8 & 43.2 & 51.4 & 41.3 & 1.20 & 0.1\% \\
Qwen3-8B & 16 & 63.4 & 66.2 & 81.7 & 62.5 & 2.27 & 0.2\% \\
Qwen3-8B & 32 & 177.3 & 182.2 & 210.3 & 175.6 & 4.12 & 0.1\% \\
\bottomrule
\end{tabular}
\end{table*}

\paragraph{The model.} The cost model answers one query: the time a verify of class $k$'s offered width takes at context $c$. It is fitted from the verifies the engine runs, so measuring costs no extra work.

\paragraph{Which widths are offered.} The model learns the widths rather than taking them from the backend. It starts from a ladder of 2, 4, 8, 16, 32 and 64 rows and treats the widths between two neighbours as one \emph{class}, priced at its top width, the only width in it the chooser is offered. A class is thus an interval of this learned table; it coincides with a row class only where splits have found the kernel's steps. The model splits a class only where a width inside could change the choice. Between two measured widths $a < b$, a width inside is taken to cost at least $T(a)$ and to accept at most what $b - 1$ rows accept, which the round already knows; if even $b - 1$ rows at $T(a)$ cannot clear the chooser's switching margin against the width it holds, nothing inside is measured. Once the chooser has held a width for 8 rounds, the nearest class that passes this test is split, at the width next to the held width when the class's top costs more than 25\% above its bottom and at its midpoint otherwise, and the new width is measured three times. At most 24 widths are learned. A measurement round asking for more rows than any tree the drafter can offer teaches the table its reach (57 rows for the Qwen3 drafters). A step down inside a class, like the 48-row verify above, is found only if a split lands on it. The alternative, taking the classes from the backend's matrix kernels, fits only the 8-bit kernel on this engine (one class every 8 rows plus the tile boundaries of Eq.~\eqref{eq:2}); the 4-bit and routed targets would borrow a partition that is not theirs. Table~\ref{tab:classes} compares the two (\S\ref{sec:4.5}).

\begin{proposition}\label{prop:2}
Let $T$ be constant on each class, let $S_d$ and $S_{\mathrm{ng}}$ be non-decreasing in $n$, and let $g_d, g_n \geq 0$. Then some maximiser of $V$ in Eq.~\eqref{eq:7} (\S\ref{sec:3.4} defines its terms) is the top width of a class.
\end{proposition}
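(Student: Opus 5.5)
We do not yet have Eq.~\eqref{eq:7}, so we assume the form its terms suggest. Write $V(n)$ as a benefit term that is non-decreasing in the expected accepted tokens, minus the verify time priced at the long-run rate. A natural reading is
\[
V(n) = g_d\,S_d(n) + g_n\,S_{\mathrm{ng}}(n) + \text{(terms not depending on } n) - \rho\,T(n,c),
\]
or, more generally, a function that is non-decreasing in $S_d(n)$ and $S_{\mathrm{ng}}(n)$ for fixed $T$, and non-increasing in $T$ for fixed benefit. The only facts we need are therefore monotonicity in the $S$ terms, which comes from $g_d,g_n\ge 0$ (and, if $V$ also contains the committed bonus token or the rate $\rho$, from these being fixed within a round), and dependence on $n$ through $T$ only via the class value. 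If Eq.~\eqref{eq:7} instead has a ratio form $(1+S)/(D+T)$, the same argument goes through: for fixed positive denominator the ratio is non-decreasing in the numerator.

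The argument is an exchange step. Let $n^\dagger$ be any maximiser of $V$ over the candidate widths; a maximiser exists because only finitely many widths are learned, at most 24. Let $k$ be the class containing $n^\dagger$ and $\bar n_k \ge n^\dagger$ its top width. Since $T$ is constant on class $k$, $T(\bar n_k,c)=T(n^\dagger,c)$. Since $S_d$ and $S_{\mathrm{ng}}$ are non-decreasing, $S_d(\bar n_k)\ge S_d(n^\dagger)$ and $S_{\mathrm{ng}}(\bar n_k)\ge S_{\mathrm{ng}}(n^\dagger)$. Because $g_d,g_n\ge 0$, each benefit term at $\bar n_k$ is at least its value at $n^\dagger$, while the cost term is unchanged. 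Hence $V(\bar n_k)\ge V(n^\dagger)$, so $\bar n_k$ is also a maximiser, and it is the top width of a class.

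For the monotonicity hypothesis itself, Proposition~\ref{prop:1} gives it directly when $S$ is the best-first prefix sum of Eq.~\eqref{eq:1}: adding nodes adds nonnegative $Q$'s. The proposition, however, takes it as an assumption, so nothing more is needed.

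The main obstacle is pinning down exactly which terms of Eq.~\eqref{eq:7} depend on $n$ other than through $S_d$, $S_{\mathrm{ng}}$ and $T$. Two cases need checking: a fixed per-round cost $D$ that does not vary with $n$, and a rate $\rho$ fixed per context band. Both are harmless. If some term did depend on $n$ inside a class, for example a switching margin tied to the incumbent width, we would have to argue separately. The margin only decides ties against the incumbent and does not change which widths maximise $V$ itself, so we would state the result for the unmargined $V$. The boundary case $n^\dagger=\bar n_k$ is trivial, and ties are handled by ``some maximiser''.
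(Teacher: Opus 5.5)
Your exchange argument (move any maximiser to the top of its class, where $T$ is unchanged and the tokens term cannot fall because $g_d,g_n\ge 0$ and $\Delta S_d(n)=S_d(n)-S_d(n_0)$ is non-decreasing) is the paper's proof, and your assumed form agrees with Eq.~\eqref{eq:7} up to terms that do not depend on $n$. One small slip: a maximiser exists because the widths are finitely many integers (at most 64). The bound of at most 24 learned widths does not give this, because the learned widths are only the class tops you move to.
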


\begin{proof}
Inside a class $T$ does not change, and with non-negative gains the tokens term cannot fall as $n$ grows, so moving any width to the top of its class never lowers $V$.
\end{proof}

The gains are clamped to [0.25,~4] (\S\ref{sec:3.4}), so the condition always holds. When a wider class is cheaper than a narrower one, as at the 48-row inversion, each width is priced at the cheapest class whose top is at least that width, so the narrower, dearer class is never chosen. vLLM's running maximum over narrower classes (\S\ref{sec:6}) does the opposite: it raises the cheaper wide class to the narrower price. A stopping rule that grows the tree until the estimated speedup first falls \cite{oh2026bastion} assumes a convex cost and can stop below a cheaper wide class. SGLang's DSpark scheduler has the idea of Proposition~\ref{prop:2} as an option that fills a verify up to the CUDA-graph size its forward is padded to anyway \cite{zheng2024sglang}.

\paragraph{The context law.} Interference is one-sided: a busy GPU, a cold pipeline or a thermal limit makes a verify slower, and nothing makes it faster than the machine can run it. We therefore fit a low quantile, not the mean. For each class the model regresses the verify time $y_{t}$, normalised by the class's scale (the lowest of its first three measured times), on a basis in the context $u = c/8192$:

\begin{equation}\label{eq:4}
\begin{split}
&\min_{\theta}\, \sum_t \ell_\tau\bigl(y_t - \theta^{\top}\phi(u_t)\bigr) + \lambda_1\|\theta\|_1 + \lambda_2\|\theta\|_2^2,\\
&\phi(u) = \bigl[1, u, u^2, \sqrt{u}, \ln(1+u)\bigr],\\
&\ell_\tau(v) = v\,\bigl(\tau - \mathbf{1}[v<0]\bigr)
\end{split}
\end{equation}

\begin{proposition}\label{prop:3}
Let an observed verify time be $Y = t + \epsilon$, where $t$ is the machine's time for the verify and $\epsilon \geq 0$ is interference with $P(\epsilon = 0) = \pi$. For every $0 < \tau < \pi$ the $\tau$-quantile of $Y$ is $t$, while its mean is $t + E[\epsilon]$.
\end{proposition}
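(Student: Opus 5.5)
The plan is to work straight from the distribution function of $Y$. Throughout, $t$ is a fixed number for the verify in question, so all randomness is in $\epsilon$. We use the usual definition of the $\tau$-quantile as the generalised inverse, $q_\tau(Y) = \inf\{y : P(Y \le y) \ge \tau\}$. This is also the value the pinball loss $\ell_\tau$ of Eq.~\eqref{eq:4} is minimised at in expectation, which ties the statement to the fitting rule.

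\textbf{The quantile.} Since $\epsilon \ge 0$, we have $Y \ge t$ almost surely, so $P(Y \le y) = 0$ for every $y < t$. At $y = t$ itself, $P(Y \le t) = P(\epsilon = 0) = \pi$, because $\epsilon \le 0$ forces $\epsilon = 0$. Take any $\tau$ with $0 < \tau < \pi$.
\begin{itemize}
\item At $y = t$ the distribution function is $\pi > \tau$, so $t$ belongs to the set in the infimum.
\item Every $y < t$ has distribution function $0 < \tau$, so no such $y$ belongs to the set.
\end{itemize}
Hence $q_\tau(Y) = t$.

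\textbf{Uniqueness.} The strict inequalities give a little more. For $\tau < \pi$ the distribution function jumps across the level $\tau$ exactly at $t$: it is $0$ just below $t$ and at least $\pi > \tau$ from $t$ on. So $t$ is the unique minimiser of $E[\ell_\tau(Y - \theta)]$ over $\theta$, not merely one point of a flat set of minimisers. To check this directly, compute the one-sided derivatives of $\theta \mapsto E[\ell_\tau(Y-\theta)]$:
\begin{itemize}
\item the right derivative at $\theta$ is $P(Y \le \theta) - \tau$;
\item the left derivative at $\theta$ is $P(Y < \theta) - \tau$.
\end{itemize}
At $\theta = t$ these equal $\pi - \tau > 0$ and $0 - \tau < 0$ respectively. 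So the function has a strict kink with its minimum at $t$, and it is strictly monotone on either side. This uses only that $\tau > 0$ and $\tau < \pi$.

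\textbf{The mean.} Linearity of expectation gives $E[Y] = t + E[\epsilon]$ directly. If $E[\epsilon] = \infty$, read this as an infinite mean. Otherwise $E[\epsilon] > 0$ whenever $\pi < 1$. That is the point of the contrast: the mean is biased upward by interference, while the low quantile is not.

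\textbf{Main obstacle.} There is no real difficulty here. The only care needed is the quantile convention at the atom $y = t$, and the strict inequality $\tau < \pi$ is what makes the answer convention-independent. I would state the generalised-inverse definition explicitly and note the uniqueness above, so the result holds under either common convention.
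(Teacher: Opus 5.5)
Your proof is correct and is the paper's argument: everything rests on $P(Y<t)=0<\tau$ and $P(Y\le t)=\pi>\tau$, which you use once for the generalised inverse and again as the sign change of the pinball loss's one-sided derivatives, with the mean by linearity. One small caveat: the derivative check needs $E[\epsilon]<\infty$, a case you raise yourself for the mean; otherwise work with $E[\ell_\tau(Y-\theta)-\ell_\tau(Y-t)]$, or note that uniqueness already follows from those two inequalities, as the paper states it.
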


\begin{proof}
$P(Y < t) = 0 < \tau$ and $P(Y \leq t) = \pi > \tau$, so $t$ is the unique $\tau$-quantile.
\end{proof}

A least-squares fit absorbs every burst of interference into the price; a low quantile does not. We set $\tau = 0.15$ by hand; by Proposition~\ref{prop:3} it recovers the machine's time as long as more than 15\% of rounds run free of interference (a small two-sided jitter makes the fitted quantile slightly low). The quantile loss is Koenker and Bassett's \cite{koenker1978regression}. The context coefficients are non-negative, so the fitted time never falls as context grows. The basis is redundant on purpose: an affine law fits the dense targets, but other attention routes can add curvature past some context length. The penalty is an elastic net \cite{zou2005regularization} with $\lambda_{1} = 0.002$ and $\lambda_{2} = 0.01$ on the normalised time: over a short run the context barely moves and the basis terms are nearly collinear, and the L2 term keeps the weight shared instead of letting it jump between them. The solver is FTRL-Proximal \cite{mcmahan2013ad}, with the per-coordinate learning rate floored so the law can follow a machine that drifts. The fitted law is clamped to between 0.5 and 3 times the class's lowest observed time.

\paragraph{The sample bank.} The online pass alone does not fit the shape. Contexts arrive one request at a time, each sample pulls the intercept toward the current context, and only contrasts between requests teach the slope: on Qwen3-4B the online 8-row law priced 28.9, 29.8 and 35.0~ms at 443, 1,596 and 8,444 tokens, where the verify's 15th percentile was 25.2, 26.7 and 35.3~ms. Each law therefore keeps a bank of the newest 16 samples of every context band (the bit length of the context), stored with its weights and FTRL's accumulated squared gradients, so that a restored law learns at the rate it had reached. At the end of each request a batch refit trains a candidate law on the bank, bands interleaved, holding out the newest quarter of every band with four samples or more. The candidate replaces the online law if its quantile loss is lower on the judged samples: the held-out ones, plus every sample of a band too small to hold any out. On the same samples it prices 25.6, 26.7 and 35.0~ms. Sparse banks are common. A width run only to measure it holds three samples in a band, too few to hold a quarter out, and three online steps barely move a slope: on Qwen3-4B the 4-row law priced 26.1~ms at 8,444 tokens, where its own samples took 34.2~ms. The level (below) is not stored, so every long request started from that price and ran the width until its level corrected it. The refit therefore runs as many steps on a sparse bank as on a full one and judges it on all its samples; refitted, the 4-row law prices 34.2~ms. A misfit law does little harm to the running class, whose level corrects it within 8 rounds; it misleads the choice through the classes not running, which are the neighbours the chooser weighs against the running one.

\paragraph{The price.} The price the chooser reads for class $k$ is the product of three terms, each on its own timescale:

\begin{equation}\label{eq:5}
T(n_k, c) = \mathrm{law}_k(c)\cdot \mathrm{level}_k \cdot D_{\mathrm{now}}/D_k
\end{equation}

The law is the fit of Eq.~\eqref{eq:4}, a property of the kernels that changes slowly. The level is the minimum, over the class's last 8 verifies, of the measured time over the law's value, clamped to [0.5,~2]; it follows the machine within 8 rounds of the class running and, unlike an all-time minimum, rises again when the machine slows. The last factor corrects for staleness. Only the chosen class runs in a round, so every other class was last measured earlier, possibly on a quieter machine, and would look cheaper than it is. The round's fixed cost $D$ (the drafter's forward and the tree build) runs every round whatever the width, so it serves as a clock: each class records $D$ when it is measured, and its price is scaled by the current $D$ over that value. Until a class's law has enough samples, its price is its lowest measured time, scaled by the same clock. The first round of a request is not learned from. On all four targets it runs cold, its verify 5--11\% and its drafter 3--7\% slower than typical, while the second round is within 0.3\%; as the first sample of an empty level window it would price its width out of the offer, and that width would not run again to correct it. Pricing a width takes about 11~ns of host time and a training step, bank included, about 44~ns, so with 24 widths a round spends under 0.4~$\mu$s in the cost model; the refit at the end of a request takes about 0.04~ms per law.

\paragraph{Cold start.} A width with fewer than three measurements is measured by running rounds at it, narrowest first; these rounds still decode. The ladder's six widths take 18 rounds and each later split three more; a round whose tree comes out narrower than the width it measures does not count. From an empty store the first request spends 55 to 67 rounds measuring, 8--10\% of its decode time, and the rest of the instrumented run none (\S\ref{sec:4.6}). The table is stored, so a restart begins from the widths already measured.

\subsection{Acceptance model and n-gram candidates}\label{sec:3.3}

Two quantities available at draft time predict acceptance. The confidence head is a scalar from a small network trained offline for this purpose, calibrated on average over its training distribution. The pick's \emph{share} is its softmax mass among the drafter's top $K$ at that position, from the same forward's logits; it reports how peaked the drafter's distribution is, which separates a determined continuation, such as a closing bracket or the rest of an identifier, from an open one. Where the two disagree the share is the better guide: on LFM2.5-2.6B, over the development corpus's held-out prompts (setup notes), picks with head 0.34 and share 0.94 were accepted 75\% of the time, and picks with head 0.70 and share 0.43, 40\%. The model below uses both; Table~\ref{tab:accept} measures on every target how much better it predicts acceptance than the head-based estimate it replaces.

The head in \cite{cheng2026dspark} is trained on the acceptance rate of speculative sampling at temperature 1, while we decode greedily, accepting a candidate only if it equals the target's argmax; part of the head's loss may come from that mismatch. Fitting acceptance to the target's outcomes is not new: TreeSpark \cite{zhou2026treespark} fits a two-parameter map from the drafter's probability to acceptance on DSpark trees, and TAPS \cite{wang2026taps} trains a softmax over each parent's children, both offline. Our model is fitted online, per deployment, from an empty store, and carries terms for the n-gram source.

\begin{table*}[t]
\centering\small
\caption{Acceptance prediction on the evaluation subset: log loss per labelled node of the head-based estimate (the drafter's confidence head with its per-request offset) and of the fitted model, both scored on each round's labels before either learned from them, over every request after the first that did not reach the token cap. \emph{priced} is the share of rounds whose tree the fitted model priced; the last column gives rounds priced over tree rounds for the first request, which starts from an empty store.}
\label{tab:accept}
\setlength{\tabcolsep}{4pt}
\begin{tabular}{lrrrrrrrr}
\toprule
\rowcolor{head}\textbf{target} & \textbf{requests} & \textbf{nodes} & \textbf{head-based} & \textbf{fitted} & \textbf{change} & \textbf{requests won} & \textbf{priced} & \textbf{first request} \\
\midrule
LFM2.5-2.6B & 91 & 580,495 & 0.1410 & 0.1238 & $-$12.2\% & 91/91 & 100\% & 157/436 \\
LFM2.5-8B-A1B & 94 & 592,660 & 0.1520 & 0.1360 & $-$10.5\% & 94/94 & 100\% & 322/644 \\
Qwen3-4B & 87 & 567,698 & 0.1644 & 0.1400 & $-$14.8\% & 87/87 & 100\% & 779/838 \\
Qwen3-8B & 93 & 634,175 & 0.1572 & 0.1336 & $-$15.0\% & 93/93 & 100\% & 749/864 \\
\bottomrule
\end{tabular}
\end{table*}

\begin{figure*}[t]
\centering
\includegraphics[width=0.8\textwidth]{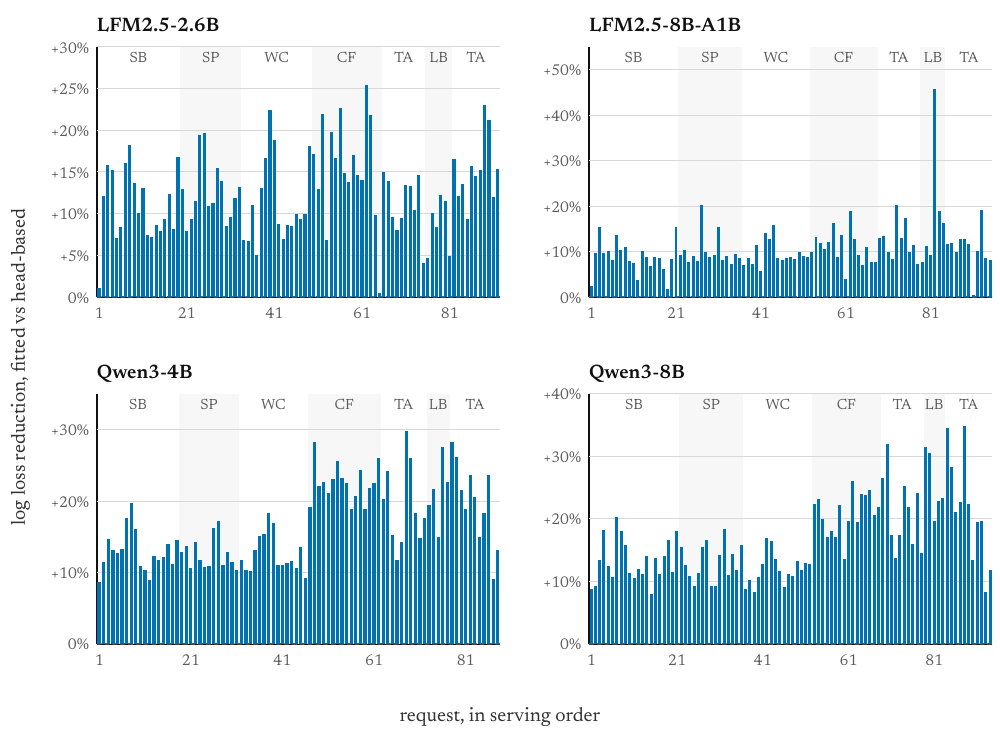}
\caption{The acceptance model while serving the evaluation subset from an empty store (looped replies left out): for each request, how much lower the fitted model's log loss per labelled node is than that of the head-based estimate it competes with (1 $-$ fitted / head-based), both scored before either learns from the request's labels. A bar above the dashed zero line (blue) is a request the fitted model predicted better. Shaded bands mark the six sources in serving order (SB Spec-Bench, SP SPEED-Bench, WC WildChat, CF ConvFinQA, TA ToolACE, LB LongBench). The fitted model is lower on 91 of 91, 94 of 94, 87 of 87, 93 of 93 requests after the first, in the order of the panels. At the 20 changes of source (five on each target), the first request of the new source ranks 0.47 on average among that source's other requests by this reduction (0 lowest, 1 highest) and below their median at 8 of them.}
\label{fig:accept-curve}
\end{figure*}

\paragraph{N-gram candidates.} AdaSpark adds \emph{n-gram fill} to the drafter's tree: continuations of the request's own text, prompt and reply, indexed by suffix. From the anchor and from each pick it follows a chain: the token that followed an earlier occurrence of the current context, read on along the earlier passage, or else the most frequent follower of the last three tokens. A chain step equal to a candidate the drafter already proposed marks that candidate as agreed; otherwise it adds an n-gram node. Chains stop 32 tokens below the anchor, past the drafter's block, or when nothing is proposed. The n-gram nodes enter the same best-first order as the drafter's, priced by the n-gram terms of Eq.~\eqref{eq:6}.

\paragraph{Form.} At most one child of an accepted parent can be the target's argmax, so the candidates of one parent and an explicit \emph{none} outcome share one softmax:

\begin{equation}\label{eq:6}
\begin{aligned}
q_j &= P(j\ \text{accepted} \mid \text{parent accepted})\\
    &= \frac{\exp(s_j)}{1 + \sum_{i \in \mathrm{cand}(\mathrm{parent})} \exp(s_i)},\\[2pt]
\text{pick: } s &= a_0 + a_1\logit h + a_2\logit\sigma + \psi + b_{\mathrm{req}},\\
\text{sibling: } s &= a_3 + a_4\log r + a_5\logit h\\
    &\quad + a_6\logit\sigma + \psi + b_{\mathrm{req}},\\
\text{n-gram: } s &= \logit p_{\mathrm{src}} + w_{\mathrm{bucket}} + \psi + b_{\mathrm{req}}
\end{aligned}
\end{equation}

Here $h$ is the confidence head at the position, $\sigma$ the pick's share, $r$ a sibling's share of the siblings' total, $\psi$ the context term (and, for drafter candidates, a depth term), $b_{\mathrm{req}}$ a per-request intercept, and $p_{\mathrm{src}}$ the n-gram source's measured top-1 precision on the committed text. A bucket indexes an n-gram candidate's evidence (the length of the copy it continues, or its follower count), its kind, and the number of n-gram nodes above it on its path. Two further terms connect the sources. A candidate both sources propose (\emph{agreed}) keeps its drafter score and adds $\max(0, \logit p_{\mathrm{src}})$ and the weight of an agreed bucket, indexed by the n-gram proposal's bucket and by whether the candidate is a pick or a sibling. A drafter candidate whose parent also has a different n-gram proposal (\emph{contradicted}) subtracts $\max(0, \logit p_{\mathrm{src}})$ and adds the weight of a contradiction bucket, indexed by that proposal's evidence and by pick or sibling. The sum in the denominator runs over every candidate of the parent, placed in the tree or not, so each $q$ is fixed for the round before the tree is chosen.

\paragraph{Why the n-gram needs its own terms.} An n-gram node has neither a head nor a share, and its acceptance depends on whether the text repeats, which the drafter's features do not see. With one intercept for both sources, the maximum-likelihood fit matches the average prediction to the average outcome over both sources together, so a source whose base rate differs from the pooled rate is mispriced, the two in opposite directions. A term per source makes each source's average hold separately. The sources also carry evidence about each other, which the agreed and contradicted terms price. Agreement multiplies an agreed node's odds by $P(\text{agree} \mid \text{correct}) / P(\text{agree} \mid \text{wrong}) = p/\bigl((1-p)c\bigr)$, where $c \leq 1$ is the chance that a wrong follower lands on this very candidate; $\logit p$ is a lower bound on its logarithm, and the floor at zero keeps a weak source from counting against a token it agrees with. Appendix~\ref{app:C} lists what the two terms corrected when they were added.

\paragraph{Learning.} With an empty store the drafter's seven coefficients start from a prior derived from the features' definitions, not from an offline fit: the pick scores $\ln 2 + \logit h$, a sibling scores $\log r$, and both share coefficients and every bucket weight start at zero. Each coefficient is updated online by FTRL-Proximal on its deviation from the prior and leaves the prior only once its accumulated gradient exceeds $3\sqrt{m}$ after $m$ updates; the request intercept adapts by AdaGrad \cite{duchi2011adaptive}. After each round every candidate of every accepted parent is labelled, placed or not, so a round also teaches the model about nodes it did not verify. The model competes with the \emph{head-based estimate} it replaces: a pick's confidence head, a sibling's share of the top-$K$ softmax mass and an n-gram node's source precision, all shifted by a per-request logit offset. A guard scores each round's labels under both before either learns from them, and the fitted model prices the tree only while its decayed log loss is the lower. The coefficients persist across requests; the offset and the intercept restart with each request.

\paragraph{Exploration.} Every drafter candidate of an accepted parent is labelled, so the drafter's terms need no exploration. An n-gram bucket can go unlabelled: its candidates hang below other n-gram nodes, and a bucket priced too low is never placed and so never corrected. Every 16th tree round that has n-gram candidates, the last row goes to the n-gram frontier node of the bucket with the fewest labels.

\begin{corollary}\label{prop:4}
At every width $n$, the first $n-1$ nodes of the merged order have at least the $S$ of any tree of $n-1$ drafter candidates, under the round's $q$, and the verify has the same width.
\end{corollary}

\begin{proof}
Every tree of drafter candidates is a tree of the merged set, so Proposition~\ref{prop:1} applied to the merged set bounds it.
\end{proof}

The comparison is under the round's $q$, which already counts the n-gram candidates in each parent's softmax. Given the model, adding n-gram candidates never makes a verify of a given width worse: they take rows only from drafter candidates they outrank, and the verify's time depends only on the width. They can also move the chooser to a wider class, when their expected tokens, scaled by their own gain $g_{n}$ (\S\ref{sec:3.4}), pay for the step; Table~\ref{tab:ngram} counts both cases on each target. Without such pricing the failure does occur: with a fixed split of a 60-node budget, Graft \cite{shen2026draft} found that retrieved candidates attached at the root displaced strong draft nodes and made decoding slower than the drafter's tree alone (3.62$\times$ against 3.73$\times$, averaged over five benchmarks); it avoids this by pruning the drafter's tree first and filling only the freed slots.

\subsection{Choosing the width}\label{sec:3.4}

For each offered width $n$ the chooser forms the round's expected tokens, $1 + S(n)$, and its expected time, $D + T(n, c)$. Prior cost-aware schedulers \cite{hong2026inference,pan2026making,oh2026bastion,liu2024turbospec}, DSpark's own scheduler \cite{cheng2026dspark} and the DSpark schedulers in vLLM and SGLang (\S\ref{sec:6}) maximise their ratio. AdaSpark instead maximises

\begin{equation}\label{eq:7}
\begin{split}
V(n) = {}& 1 + S(n_0) + g_d\,\Delta S_d(n) + g_n\,\Delta S_{\mathrm{ng}}(n)\\
         & - \rho_m\,\bigl(D + T(n, c)\bigr)
\end{split}
\end{equation}

where $n_{0}$ is a fixed reference width of 8 rows, so that learning a narrower width does not move the point the marginal is measured from, $\Delta S_{d}$ and $\Delta S_{\mathrm{ng}}$ are the expected tokens that the drafter's and the n-gram's nodes add above $n_{0}$ (negative below it: the tokens a narrower tree gives up), and $\rho_{m}$ is the long-run decode rate of the context band $m$ the round is in. It differs from the ratio in three ways.

\paragraph{(i) Time is priced at the long-run rate.} The quantity to optimise is the time to decode a reply, not the ratio of any one round.

\begin{proposition}\label{prop:5}
Split a reply's positions into context bands, band $m$ holding $N_m$ positions. Assume that within a band the round states are independent draws from a distribution that does not depend on the width rule, that the mean round time is positive, and that the reply's text does not depend on the rule. Then, as $N_m$ grows, a width rule minimises the band's expected decode time if and only if, for almost every state, it maximises $E[A - \rho_m^{*} C \mid \text{state}]$, where $A$ is a round's committed tokens, $C$ its time, and $\rho_m^{*}$ the band's best achievable rate.
\end{proposition}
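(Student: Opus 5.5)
The plan is to treat each band as a renewal-reward process and reduce the band's decode time to a ratio of per-round expectations, then apply the standard Dinkelbach argument for fractional programs. Fix a band $m$ and a width rule $\pi$, a measurable map from round state to width. Under the assumptions of the statement, the rounds in band $m$ are i.i.d.\ draws of $(A_t, C_t)$ whose law depends on $\pi$ only through the width chosen at each state. Let $\bar A(\pi) = E[A]$ and $\bar C(\pi) = E[C]$. Then $\bar A \geq 1$, because every round commits at least the target's argmax, and $\bar C > 0$ by assumption. The band is finished once $N_m$ positions are committed, so the number of rounds is the first-passage time $R$ of the partial sums $\sum A_t$ past $N_m$. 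By Wald's identity and the elementary renewal theorem, $E[R]/N_m \to 1/\bar A$, and the band's expected time $E\sum_{t\le R} C_t = E[R]\,\bar C$ satisfies $E[\text{time}]/N_m \to \bar C(\pi)/\bar A(\pi)$. Since the reply's text is fixed, $N_m$ does not depend on $\pi$. Asymptotically, minimising the band's expected time is therefore the same as maximising the rate $r(\pi) = \bar A(\pi)/\bar C(\pi)$, and $\rho_m^* = \sup_\pi r(\pi)$.

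Next comes the Dinkelbach step. For any $\pi$, $\bar A - \rho_m^*\bar C = \bar C\,(r(\pi) - \rho_m^*) \le 0$, with equality exactly when $\pi$ attains $\rho_m^*$. So a rule is rate-optimal if and only if it attains $\max_\pi E[A - \rho_m^* C] = 0$. The objective $E[A - \rho_m^* C]$ is linear in the rule, being an expectation over independent states. It decomposes as $\int E[A - \rho_m^* C \mid s, \pi(s)]\,dP(s)$, where the state distribution $P$ does not depend on $\pi$ by assumption. A rule maximises this integral if and only if it maximises the integrand for $P$-almost every $s$. The pointwise maximiser exists because the set of offered widths is finite. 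One direction is immediate: if $\pi$ fails to maximise the integrand on a set of positive measure, switching to the pointwise argmax on that set strictly raises the integral. Chaining the equivalences gives the "if and only if" of Proposition~\ref{prop:5}.

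The main obstacle is the first step, stating the asymptotic equivalence cleanly. "Minimises expected decode time as $N_m$ grows" should be read as minimising the limit of $E[\text{time}]/N_m$, since finite-$N_m$ boundary effects (overshoot at the band's end) are $O(1)$ and do not affect the leading order. I would make this precise with the renewal-reward theorem, using $\bar A \ge 1$ and bounded $A \le 1 + $ tree depth to control overshoot. A second subtlety is that $\rho_m^*$ must be attained, or at least that the supremum is finite and approached. Here I would use the finite width set, so that deterministic stationary rules form a compact family under pointwise maximisation. I would then note that the pointwise argmax rule at $\rho_m^*$ achieves $E[A - \rho_m^* C] \ge 0$ and hence attains the rate.
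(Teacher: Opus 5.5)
Your proposal is correct and follows the paper's own route: the renewal--reward theorem reduces the band's time per position to $E[C]/E[A]$, and Dinkelbach's theorem turns rate-optimality into $\max_\pi E[A-\rho_m^{*}C]=0$. Because the state distribution does not depend on the rule, that maximisation then separates state by state, almost everywhere.

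You add details the paper leaves implicit: Wald's identity for the first-passage round count, a pointwise maximiser that exists because the width set is finite, and attainment of $\rho_m^{*}$, which the paper takes for granted. For attainment, the compactness appeal is not a real argument. The clean argument is that $\rho\mapsto\max_\pi E[A-\rho C]$ is Lipschitz and nonnegative for every $\rho<\rho_m^{*}$, so it is nonnegative at $\rho_m^{*}$ as well.
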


\begin{proof}
By the renewal-reward theorem the long-run rate of a rule in band $m$ is $R_m = E[A]/E[C]$, and the band's decode time divided by $N_m$ tends to $1/R_m$. The text, and so each $N_m$, is fixed, so the reply's time is minimised by maximising each $R_m$. By Dinkelbach's theorem \cite{dinkelbach1967nonlinear}, $\max E[A]/E[C] = \rho^{*}$ exactly when $\max E[A - \rho^{*} C] = 0$. With the state distribution fixed, $E[A - \rho^{*} C]$ is the average over states of the conditional expectation, which is maximal if and only if the conditional expectation is maximal for almost every state.
\end{proof}

The independence assumption is an approximation: a round's width decides where the next round starts, and so what the drafter proposes next. The text assumption holds because decoding is greedy and, within the widths one accumulation order serves, a verify's rows do not depend on its width (\S\ref{sec:3.5}); the other rounds (wider verifies, up to 4\% of rounds, and the routed target's 2-row verifies, 13.4\%) can change the text only at a near tie (Table~\ref{tab:agree} counts how often the text differs). The per-round ratio is not this rule. Between a narrow and a wide width, the ratio takes the wide one when its marginal tokens per unit time exceed the round's own ratio; Proposition~\ref{prop:5} takes it when they exceed $\rho_{m}^{*}$. In a round better than its band the ratio refuses rows worth buying, and in a worse round it buys rows that are not.

AdaSpark uses the band's achieved rate in place of $\rho_{m}^{*}$: committed tokens over time, both decayed with a half-life of 64 rounds. This is a stochastic form of Dinkelbach's iteration: the rate the rule achieves becomes the price it uses next. The same renewal-reward argument has chosen a chain's draft length under network delay \cite{sun2026delay}; here it sets the tree width over a cost with steps. A band is the set of contexts with the same bit length (128--255 tokens, 256--511, and so on); until a band holds 8 rounds the chooser uses the per-round ratio. $D$ shifts every width's value in Eq.~\eqref{eq:7} equally; it enters the choice through $\rho_{m}$, measured over whole rounds, and through the ratio used before $\rho_{m}$ is known. On LFM2.5-2.6B, on 16 held-out development prompts (setup notes) in two interleaved passes, pricing at $\rho_{m}$ instead of the per-round ratio changed milliseconds per token by $-$0.39\% (standard error 0.46\%), a tie, and raised the mean width from 8.0--8.1 rows to 8.2--12.7 rows on the prompts with 4k--16k tokens of context, where the two rules disagree most.

\paragraph{(ii) The marginal is rescaled.} $S$, a sum of per-node probabilities, is a poor estimate of what extra rows buy. On LFM2.5-2.6B the realised gain of 16 rows over 8 was 1.69 times the predicted gain at 1,596 tokens and 0.885 times at 8,444: errors in opposite directions, which no single constant corrects. The gains $g_{d}$ and $g_{n}$, one per source, are therefore fitted as lines in context by ridge-regularised least squares of realised on predicted marginal, shrunk toward 1 and clamped to [0.25,~4]. The observations need no extra rounds: by Proposition~\ref{prop:1} and the width invariance of \S\ref{sec:3.5}, a round's accepted path also gives what each narrower prefix of its tree would have accepted. Above $n_{0}$ this needs a round wider than $n_{0}$, so only those rounds teach the gains, and the chooser's first incumbent is the widest offered width. Below $n_{0}$ every round teaches, through the rows between the narrowest offered width and the smaller of its own width and $n_{0}$; a width narrower than $n_{0}$ prices the rows it gives up with a second pair of gains, fitted the same way on those rows. One pair cannot serve both sides where the best width is $n_{0}$ itself: on Qwen3-4B at 8,444 tokens, 3 of 1,115 rounds ran wider than 8 rows and realised none of their predicted marginal, driving the fitted gain to its floor of 0.25, while the 450 rounds that revealed rows 2 to 8 realised 0.94 of theirs. Priced at the floor, a 4-row tree would seem to give up only a quarter of the tokens it does.

\paragraph{(iii) Hysteresis.} The chooser keeps its current width unless another beats it by a margin. Once $\rho_{m}$ is known, the alternative's $V$ must exceed the incumbent's by 0.04 times the incumbent's expected tokens; before that, the alternative's ratio must be 4\% higher. The margin was set on LFM2.5-2.6B as the smallest tested value at which the choice no longer flipped on noise; at about 0.05 and above it never left its incumbent.

\paragraph{No fixed width fits every target.} Table~\ref{tab:widths} compares AdaSpark's width with the same scheduler's width pinned, on the evaluation subset, whose replies run to their end (\S\ref{sec:4.1}), so one run spans contexts from a few hundred tokens to over ten thousand. A pinned arm keeps the rest of AdaSpark (the acceptance model, the n-gram candidates and a store that learns from empty), so the arms differ only in how the width is set. Every target is pinned at 8, 12 and 16 rows, and the routed target also at 4, 5 and 6, since there every row adds time; on the dense targets 2 to 8 rows take within 4\% of the 8-row time (Figure~\ref{fig:verify-cost}), so a narrower pinned width gives up tokens for little saving. The best pinned width is 16 rows on LFM2.5-2.6B and 8 on the other three targets, and one step off costs 20--23\% on the Qwen3 targets (12 rows); among pinned widths only a sweep finds the best one. AdaSpark needs no sweep: on the three dense targets its speed is 1.002, 0.999 and 1.005 times the best pinned width's over all replies, and at least 0.997 times in every context band. On the routed target the best pinned width, 8 rows, ties it (AdaSpark 1.003$\times$, interval 0.989--1.019), and 4, 5, 6, 12 and 16 rows are 4.7--14.3\% slower. There AdaSpark moves between widths from round to round, which no pinned width can do: in about the same time per round as a pinned 6-row tree (0.2\% less, over its two runs) it commits 2.8\% more tokens per round. Figure~\ref{fig:marginal} shows why the targets differ: going from 8 to 16 rows buys about 10\% more tokens per round on both LFM2.5 targets (medians 10.0\% and 10.4\%) but costs 6.8\% more time per round on the dense target and 27.9\% on the routed one.

\begin{table*}[t]
\centering\small
\caption{The width on the evaluation subset: AdaSpark against the same scheduler with its width pinned, every other part unchanged (both online models, the n-gram fill, a store learning from empty). The width columns are each pinned arm's decode speed over AdaSpark's, geometric mean of per-reply ratios (above 1 is faster than AdaSpark), with the best pinned width in bold, for all replies and by the context a reply was decoded at (prompt plus half the reply). AdaSpark runs before the pinned arms and again after them, and its speed on a reply is the mean of the two runs. The routed target is also pinned at 4, 5 and 6 rows, in a second stage run the same way and compared with its own two AdaSpark runs; a dash marks a width not run (\S\ref{sec:3.4}). \emph{AdaSpark / best} is AdaSpark over the best pinned width with its 95\% bootstrap interval over conversations; \emph{again} is AdaSpark's second run over its first. Looped and single-delta replies are left out as in Table~\ref{tab:e2e}.}
\label{tab:widths}
\setlength{\tabcolsep}{4pt}
\begin{tabular}{llrrrrrrrrr}
\toprule
\rowcolor{head}\textbf{target} & \textbf{replies} & \textbf{n} & \textbf{4} & \textbf{5} & \textbf{6} & \textbf{8} & \textbf{12} & \textbf{16} & \textbf{AdaSpark / best} & \textbf{again} \\
\midrule
LFM2.5-2.6B & all replies & 92 & -- & -- & -- & 0.907 & 0.968 & \textbf{0.998} & \makecell[r]{1.002$\times$\\ {\scriptsize [0.996, 1.008]}} & 0.995 \\
 & under 2,048 & 75 & -- & -- & -- & 0.909 & 0.970 & \textbf{1.000} & \makecell[r]{1.000$\times$\\ {\scriptsize [0.995, 1.007]}} & 0.995 \\
 & 2,048 and over & 17 & -- & -- & -- & 0.899 & 0.962 & \textbf{0.990} & \makecell[r]{1.010$\times$\\ {\scriptsize [0.991, 1.025]}} & 0.995 \\
LFM2.5-8B-A1B & all replies & 91 & 0.857 & 0.920 & 0.953 & \textbf{0.997} & 0.926 & 0.900 & \makecell[r]{1.003$\times$\\ {\scriptsize [0.989, 1.019]}} & 0.995 \\
 & under 2,048 & 74 & 0.847 & 0.910 & 0.947 & \textbf{0.996} & 0.930 & 0.902 & \makecell[r]{1.004$\times$\\ {\scriptsize [0.990, 1.020]}} & 1.002 \\
 & 2,048 and over & 17 & 0.899 & 0.969 & 0.980 & \textbf{0.998} & 0.911 & 0.891 & \makecell[r]{1.002$\times$\\ {\scriptsize [0.950, 1.042]}} & 0.964 \\
Qwen3-4B & all replies & 88 & -- & -- & -- & \textbf{1.001} & 0.801 & 0.818 & \makecell[r]{0.999$\times$\\ {\scriptsize [0.992, 1.007]}} & 1.014 \\
 & under 2,048 & 66 & -- & -- & -- & \textbf{1.003} & 0.796 & 0.811 & \makecell[r]{0.997$\times$\\ {\scriptsize [0.990, 1.005]}} & 1.017 \\
 & 2,048 and over & 22 & -- & -- & -- & \textbf{0.994} & 0.818 & 0.839 & \makecell[r]{1.006$\times$\\ {\scriptsize [0.991, 1.018]}} & 1.006 \\
Qwen3-8B & all replies & 94 & -- & -- & -- & \textbf{0.995} & 0.771 & 0.792 & \makecell[r]{1.005$\times$\\ {\scriptsize [1.000, 1.011]}} & 1.001 \\
 & under 2,048 & 74 & -- & -- & -- & \textbf{0.997} & 0.772 & 0.792 & \makecell[r]{1.003$\times$\\ {\scriptsize [0.999, 1.009]}} & 1.001 \\
 & 2,048 and over & 20 & -- & -- & -- & \textbf{0.989} & 0.768 & 0.793 & \makecell[r]{1.011$\times$\\ {\scriptsize [0.995, 1.028]}} & 0.999 \\
\bottomrule
\end{tabular}
\end{table*}

\begin{figure*}[t]
\centering
\includegraphics[width=0.8\textwidth]{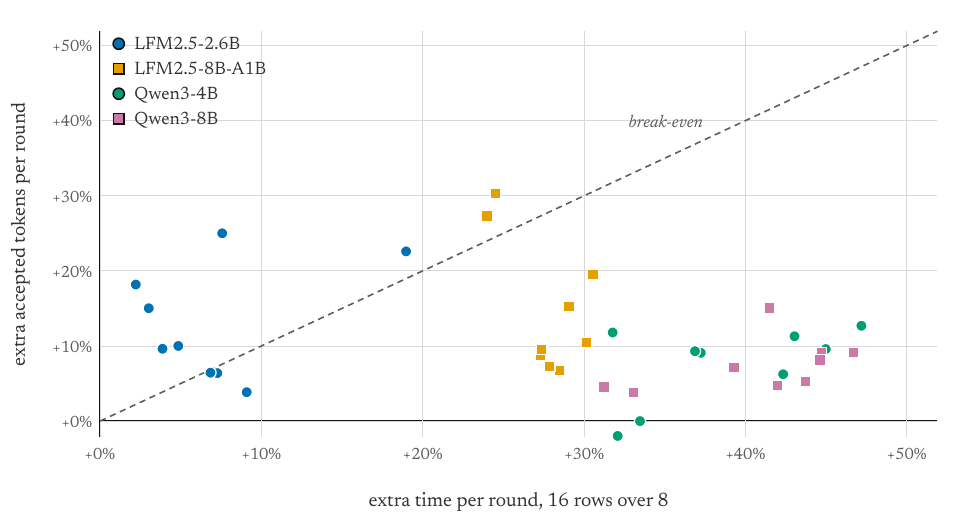}
\caption{What the second eight rows of a tree cost and what they buy. Each point is one target at one context (443, 1,596 or 8,444 tokens) and one of three prompts, from runs of 256 tokens at a fixed width of 8 and of 16 rows (acceptance model on, n-gram candidates off; setup notes): the extra time per round of a 16-row tree over an 8-row tree (the drafter included), against the extra tokens it accepts per round, both relative to the 8-row tree. Above the diagonal the wider tree decodes faster. The diagonal is the break-even line for every target.}
\label{fig:marginal}
\end{figure*}

\subsection{What the verify guarantees}\label{sec:3.5}

AdaSpark decodes greedily: every emitted token is the target's argmax on the verify row for exactly the emitted prefix. Two stronger properties were tested. A tree node whose ancestors are a chain's rows computes that chain row bit for bit, so a tree never changes a token a chain verify would have produced. And within one accumulation order of the matrix kernels, a verify's rows do not depend on its width: comparing every prefix of verifies of up to 64 rows with the same rows verified alone, on all four targets, rows are bit-identical among widths 3 to 16 on the three 4-bit targets and 3 to 24 on the 8-bit one. These ranges span several row classes; a wider verify accumulates in another order, so its rows differ by rounding, and 2 rows run a different kernel. The chooser's widths fell inside these ranges in 96.1\%, 85.6\%, 99.9\% and 99.4\% of rounds on LFM2.5-2.6B, LFM2.5-8B-A1B, Qwen3-4B and Qwen3-8B (Figure~\ref{fig:widths}); the rest were wider verifies and, on the routed target, 2-row verifies (13.4\% of its rounds). Equality with one-token decode does not hold: the verify's multi-row kernels accumulate in another order than the one-row kernel, so on a near tie their argmax can differ. Matching one-token decode was measured: it costs 2.2\% of decode speed on LFM2.5-2.6B and 5.0\% on Qwen3-4B if decode moves to the verify's kernels, or 25--70\% of verify time on 8-bit weights at 3 to 8 rows if the verify moves to decode's. We keep the fast kernels and report how often the text differs (\S\ref{sec:4.7}).

\section{Results}\label{sec:4}

\subsection{End to end on four targets}\label{sec:4.1}

Each arm serves the evaluation subset, 42 conversations (87 turns, 95 timed replies) taken by a fixed rule from six public datasets (Appendix~\ref{app:A}), one request at a time; every imparo arm starts from an empty learned store. The targets are LFM2.5-2.6B with 8-bit weights, LFM2.5-8B-A1B, a mixture of experts with 32 experts and 4 active per token \cite{liquidai2026lfm2}, and Qwen3-4B and Qwen3-8B \cite{yang2025qwen3}, the last three with mostly 4-bit weights, each with its published DSpark drafter: Liquid AI's for LFM2.5 \cite{liquidai2026lfm2} and DeepSeek's for Qwen3 \cite{deepseekai2026dspark}. Decoding is greedy with reasoning on, and every reply runs to its end or to 8,192 tokens; a reply that loops to that cap in either arm of a comparison is left out of that comparison and counted. A comparison is the geometric mean over replies of the ratio of two arms' decode speeds on the client's clock (generated tokens over the time from the first to the last streamed token), with a 95\% bootstrap interval over conversations. llama.cpp's DSpark runs at its defaults, a chain of at most three draft tokens, which we did not tune. AdaSpark runs twice on the evaluation subset (\S\ref{sec:3.4}), and its speed on a reply is the mean of the two runs. On the two ToolACE conversations whose tool schemas the set builder renamed (Appendix~\ref{app:A}), every arm's replies come from one separate run that starts from an empty store (setup notes). The mechanism measurements (Table~\ref{tab:ngram}, Figures~\ref{fig:round} to \ref{fig:calibration} and \S\ref{sec:4.6}) come from an instrumented run of AdaSpark with per-round logging on, over the \emph{short subset}: the first two turns of the first conversation of each source (11 timed replies). The logging costs time, so that run is not used for speed.

\begin{table*}[t]
\centering\small
\caption{End to end. Left: decode tok/s on the evaluation subset (Appendix~\ref{app:A}; generated tokens over decode time, summed over replies) and AdaSpark over llama.cpp DSpark (geometric mean of per-reply ratios, 95\% bootstrap interval over conversations). A reply that reached the 8,192-token cap is a repetition loop under greedy decoding: each tok/s column leaves out its own arm's looped replies and each ratio the replies that looped in either of its two arms; \emph{looped} counts the replies that looped in any arm. AdaSpark's speed on a reply is the mean of its two runs (\S\ref{sec:3.4}). A reply streamed as a single delta (a bare tool call) has no client-clock decode interval and is left out as well (none on LFM2.5-2.6B, 2 on LFM2.5-8B-A1B, none on Qwen3-4B, none on Qwen3-8B). Right: each engine's speculation speedup over its own autoregressive (AR) decode: its tok/s on the evaluation subset over its AR tok/s, which is measured on the short subset (setup notes) only (llama.cpp and imparo: LFM2.5-2.6B 43.2 and 46.1; LFM2.5-8B-A1B 99.0 and 105.0; Qwen3-4B 42.9 and 43.2; Qwen3-8B 25.6 and 25.8 tok/s). AR speed barely depends on the text; the short subset's contexts are shorter, where AR decode is faster, so these speedups are if anything understated.}
\label{tab:e2e}
\setlength{\tabcolsep}{4pt}
\begin{tabular}{lrrrrrrr}
\toprule
\rowcolor{head} & \multicolumn{5}{c}{\textbf{evaluation subset: tok/s and ratio}} & \multicolumn{2}{c}{\textbf{over own AR}} \\
\cmidrule(lr){2-6} \cmidrule(lr){7-8}
\rowcolor{head}\multirow{-2}{*}{\textbf{target}} & \makecell[r]{\bfseries llama.cpp\\ DSpark} & \makecell[r]{\bfseries imparo\\ DSpark} & \textbf{AdaSpark} & \makecell[r]{\bfseries AdaSpark /\\ llama.cpp} & \textbf{looped} & \makecell[r]{\bfseries llama.cpp\\ DSpark} & \textbf{AdaSpark} \\
\midrule
LFM2.5-2.6B & 66.9 & 83.8 & 126.3 & \makecell[r]{1.87$\times$\\ {\scriptsize [1.80, 1.95]}} & 3 & 1.55$\times$ & 2.74$\times$ \\
LFM2.5-8B-A1B & 79.1 & 105.5 & 122.1 & \makecell[r]{1.54$\times$\\ {\scriptsize [1.52, 1.57]}} & 2 & 0.80$\times$ & 1.16$\times$ \\
Qwen3-4B & 35.7 & 79.1 & 100.4 & \makecell[r]{2.83$\times$\\ {\scriptsize [2.73, 2.92]}} & 7 & 0.83$\times$ & 2.33$\times$ \\
Qwen3-8B & 21.7 & 50.9 & 68.2 & \makecell[r]{3.11$\times$\\ {\scriptsize [3.05, 3.18]}} & 1 & 0.85$\times$ & 2.64$\times$ \\
\bottomrule
\end{tabular}
\end{table*}

On the evaluation subset (Table~\ref{tab:e2e}, Figure~\ref{fig:e2e}) AdaSpark decodes 1.54--3.11$\times$ faster than llama.cpp's DSpark. Running llama.cpp's own configuration, a chain of at most three draft tokens, imparo is 1.23--2.37$\times$ faster than llama.cpp: this is the engine, with the algorithm held fixed. On the same engine and verify path (the chain is verified as a tree, \S\ref{sec:3.5}), AdaSpark is 1.17--1.52$\times$ faster than that configuration over the same replies: this is the scheduler. Speculation is worth much more on the dense LFM2.5 target than on the routed one: over each engine's own autoregressive decode, AdaSpark is 2.74$\times$ on the dense target and 1.16$\times$ on the routed one, where llama.cpp's DSpark is slower than llama.cpp's autoregressive decode (0.80$\times$). On both Qwen3 targets the two engines' autoregressive decode is at parity (imparo 1.01$\times$ llama.cpp), yet llama.cpp's DSpark is slower than its own autoregressive decode (0.83$\times$ and 0.85$\times$), while the same three-token chain in imparo is faster than imparo's (1.83$\times$ and 1.97$\times$); we have not analysed llama.cpp's implementation to explain the difference.

\begin{figure*}[t]
\centering
\includegraphics[width=0.8\textwidth]{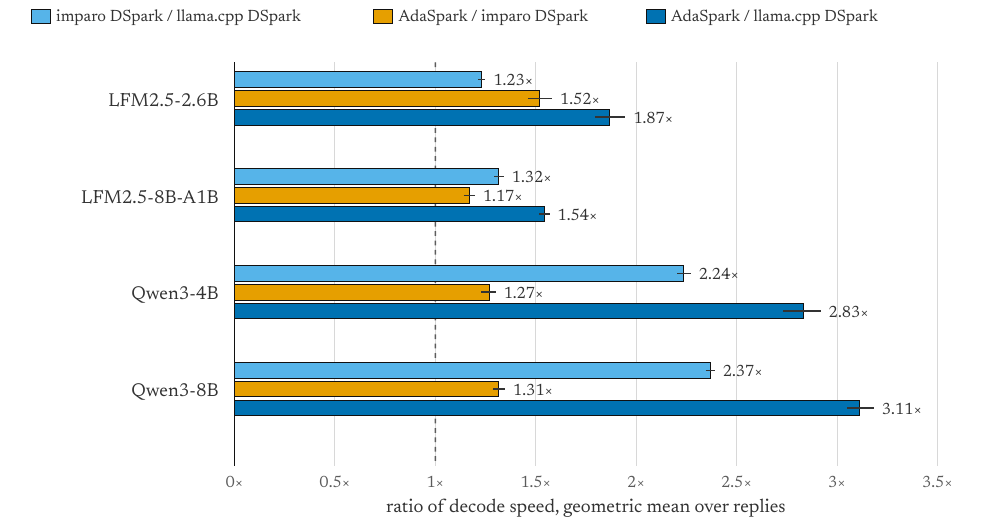}
\caption{Where the gain over llama.cpp comes from. The engine factor is imparo DSpark over llama.cpp DSpark, the two engines running the same algorithm (a chain of at most three draft tokens); the scheduler factor is AdaSpark over imparo DSpark, on the same engine; the total is AdaSpark over llama.cpp DSpark, over the replies both engines served; it is close to the product of the two, whose reply sets differ only by replies that looped in one arm. Whiskers are 95\% bootstrap intervals over conversations.}
\label{fig:e2e}
\end{figure*}

\begin{figure*}[t]
\centering
\includegraphics[width=0.8\textwidth]{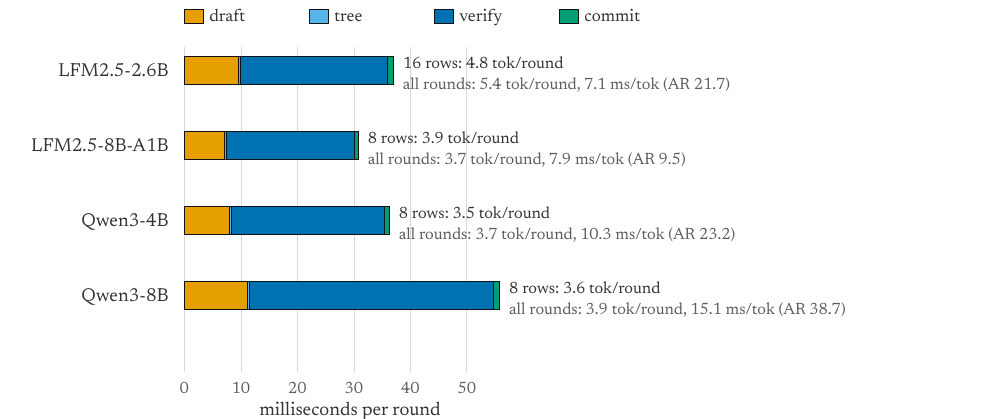}
\caption{Anatomy of an AdaSpark round on each target, at the width its rounds most often ran: the drafter's forward, the tree build, the verify and the commit (medians), and the tokens committed per round at that width. The text beside each bar also gives the tokens per round and milliseconds per token over all rounds, other widths included, beside the same engine's autoregressive (AR) milliseconds per token on the short subset. LFM2.5-2.6B: draft 9.6, verify 26.1 ms at 16 rows (90\% of rounds); LFM2.5-8B-A1B: draft 7.1, verify 22.8 ms at 8 rows (34\% of rounds); Qwen3-4B: draft 8.0, verify 27.1 ms at 8 rows (97\% of rounds); Qwen3-8B: draft 11.2, verify 43.2 ms at 8 rows (97\% of rounds).}
\label{fig:round}
\end{figure*}

Figure~\ref{fig:round} splits a round into its parts. A token costs 7.1~ms against 21.7~ms in autoregressive decode on LFM2.5-2.6B, and 10.3 and 15.1~ms against 23.2 and 38.7~ms on the Qwen3 targets. On the routed target it costs 7.9~ms against 9.5~ms: only about a billion of its eight billion parameters are read per token, so its autoregressive decode is fast, the drafter's fixed cost is a larger share of a round, and there is less for speculation to win.

\paragraph{Held-out conversations.} The 31 conversations (61 turns) of the test set outside the evaluation subset had not been run by any arm when the subset was measured. llama.cpp's DSpark and AdaSpark then ran on them, AdaSpark again from an empty store (Table~\ref{tab:heldout}). AdaSpark's lead there is within 4\% of its lead on the evaluation subset on every target, with overlapping intervals, and over the whole test set it is 1.52--3.10$\times$.

\begin{table*}[t]
\centering\small
\caption{AdaSpark over llama.cpp DSpark on the evaluation subset, on the held-out conversations (the 31 conversations, 61 turns, of the test set outside the subset, which no arm had run when the subset was measured) and on the whole test set (73 conversations, 148 turns): geometric mean of per-reply decode ratios with the 95\% bootstrap interval over conversations, and the replies it rests on. Looped and single-delta replies are left out as in Table~\ref{tab:e2e}; the last column counts the held-out replies that reached the cap.}
\label{tab:heldout}
\setlength{\tabcolsep}{4pt}
\begin{tabular}{lrrrrrrr}
\toprule
\rowcolor{head} & \multicolumn{2}{c}{\textbf{evaluation subset}} & \multicolumn{2}{c}{\textbf{held out}} & \multicolumn{2}{c}{\textbf{whole test set}} &  \\
\cmidrule(lr){2-3} \cmidrule(lr){4-5} \cmidrule(lr){6-7}
\rowcolor{head}\multirow{-2}{*}{\textbf{target}} & \textbf{ratio} & \textbf{replies} & \textbf{ratio} & \textbf{replies} & \textbf{ratio} & \textbf{replies} & \multirow{-2}{*}{\makecell[r]{\bfseries looped,\\ held out}} \\
\midrule
LFM2.5-2.6B & \makecell[r]{1.87$\times$\\ {\scriptsize [1.80, 1.95]}} & 92 & \makecell[r]{1.80$\times$\\ {\scriptsize [1.73, 1.89]}} & 64 & \makecell[r]{1.84$\times$\\ {\scriptsize [1.79, 1.90]}} & 156 & 3 \\
LFM2.5-8B-A1B & \makecell[r]{1.54$\times$\\ {\scriptsize [1.52, 1.57]}} & 93 & \makecell[r]{1.50$\times$\\ {\scriptsize [1.46, 1.54]}} & 65 & \makecell[r]{1.52$\times$\\ {\scriptsize [1.50, 1.55]}} & 158 & 1 \\
Qwen3-4B & \makecell[r]{2.83$\times$\\ {\scriptsize [2.73, 2.92]}} & 88 & \makecell[r]{2.88$\times$\\ {\scriptsize [2.83, 2.94]}} & 65 & \makecell[r]{2.85$\times$\\ {\scriptsize [2.79, 2.91]}} & 153 & 2 \\
Qwen3-8B & \makecell[r]{3.11$\times$\\ {\scriptsize [3.05, 3.18]}} & 94 & \makecell[r]{3.07$\times$\\ {\scriptsize [3.00, 3.14]}} & 67 & \makecell[r]{3.10$\times$\\ {\scriptsize [3.05, 3.14]}} & 161 & 0 \\
\bottomrule
\end{tabular}
\end{table*}

\subsection{Ablation}\label{sec:4.2}

\begin{table*}[t]
\centering\small
\caption{Ablation on the dense and the routed LFM2.5 targets. The first row of each block is llama.cpp DSpark's tok/s per source; every other cell is the arm over it, the geometric mean of per-reply decode ratios (looped replies excluded), with the 95\% bootstrap interval over conversations in the last column. From the fixed tree down, each row adds to the row above: the cost model's width, then the fitted acceptance model and n-gram candidates together. Sources: SB Spec-Bench, SP SPEED-Bench, WC WildChat, CF ConvFinQA, TA ToolACE, LB LongBench.}
\label{tab:ladder}
\setlength{\tabcolsep}{4pt}
\begin{tabular}{lrrrrrrr}
\toprule
\rowcolor{head}\textbf{arm} & \textbf{SB} & \textbf{SP} & \textbf{WC} & \textbf{CF} & \textbf{TA} & \textbf{LB} & \textbf{all replies} \\
\midrule
\rowcolor{band2}\multicolumn{8}{c}{\textit{LFM2.5-2.6B}} \\
llama.cpp DSpark (tok/s) & 67.2 & 66.6 & 60.5 & 72.9 & 64.9 & 71.8 & 66.9 \\
imparo DSpark (3-token chain) & 1.24 & 1.23 & 1.27 & 1.21 & 1.20 & 1.26 & \makecell[r]{1.23$\times$\\ {\scriptsize [1.21, 1.25]}} \\
full-block chain & 1.62 & 1.59 & 1.60 & 1.76 & 1.61 & 1.71 & \makecell[r]{1.64$\times$\\ {\scriptsize [1.59, 1.68]}} \\
fixed 16-row tree & 1.85 & 1.78 & 1.79 & 1.91 & 1.78 & 1.91 & \makecell[r]{1.82$\times$\\ {\scriptsize [1.78, 1.87]}} \\
cost-model width & 1.83 & 1.77 & 1.79 & 1.91 & 1.72 & 1.92 & \makecell[r]{1.81$\times$\\ {\scriptsize [1.76, 1.86]}} \\
\rowcolor{tint1}\textbf{AdaSpark (+ acceptance model, n-gram)} & \textbf{1.87} & \textbf{1.86} & \textbf{1.82} & \textbf{2.07} & \textbf{1.69} & \textbf{2.18} & \textbf{\makecell[r]{1.87$\times$\\ {\scriptsize [1.80, 1.95]}}} \\
\rowcolor{band2}\multicolumn{8}{c}{\textit{LFM2.5-8B-A1B}} \\
llama.cpp DSpark (tok/s) & 79.4 & 80.8 & 70.7 & 90.4 & 78.3 & 82.0 & 79.1 \\
imparo DSpark (3-token chain) & 1.32 & 1.31 & 1.39 & 1.27 & 1.30 & 1.36 & \makecell[r]{1.32$\times$\\ {\scriptsize [1.29, 1.34]}} \\
full-block chain & 1.26 & 1.24 & 1.28 & 1.32 & 1.29 & 1.23 & \makecell[r]{1.28$\times$\\ {\scriptsize [1.24, 1.31]}} \\
fixed 16-row tree & 1.33 & 1.30 & 1.39 & 1.39 & 1.37 & 1.31 & \makecell[r]{1.35$\times$\\ {\scriptsize [1.32, 1.38]}} \\
cost-model width & 1.51 & 1.48 & 1.54 & 1.54 & 1.50 & 1.53 & \makecell[r]{1.51$\times$\\ {\scriptsize [1.49, 1.54]}} \\
\rowcolor{tint1}\textbf{AdaSpark (+ acceptance model, n-gram)} & \textbf{1.50} & \textbf{1.55} & \textbf{1.55} & \textbf{1.57} & \textbf{1.55} & \textbf{1.60} & \textbf{\makecell[r]{1.54$\times$\\ {\scriptsize [1.52, 1.57]}}} \\
\bottomrule
\end{tabular}
\end{table*}

On the dense target (Table~\ref{tab:ladder}), verifying the drafter's whole block as a chain instead of three picks is worth 1.33$\times$, a 16-row tree another 1.11$\times$, and the cost model's width is within 1\% of the fixed tree (0.991$\times$, interval 0.981--1.000). The fixed arm uses 16 rows because that is this target's best pinned width (Table~\ref{tab:widths}); the cost model chose that width on its own in 59\% of its rounds on the instrumented run, and 14 or 15 rows in another 26\%. The fitted acceptance model and the n-gram candidates, added together because the n-gram candidates are priced by the model's n-gram terms (\S\ref{sec:3.3}), are worth 3.4\% (interval 1.4--5.5\%): 13.4\% on LongBench and 8.4\% on ConvFinQA, where the reply repeats its input. On the routed target the order of the gains changes. The full-block chain is no faster than the three-pick chain (0.969$\times$, interval 0.936--1.000), because each extra row costs expert traffic, and a fixed 16-row tree recovers little. The cost model's width, which spread its rounds over 2 to 16 rows (8 rows in 48\% of them, 4 rows in 17\%, 2 rows in 8\%), is 11.8\% faster than the fixed 16-row tree, which is not this target's best width (against the best, 8 rows, AdaSpark ties; Table~\ref{tab:widths}). The acceptance model and the n-gram candidates add 1.8\% there (interval 0.5--3.4\%).

\subsection{What n-gram fill adds}\label{sec:4.3}

\begin{table*}[t]
\centering\small
\caption{What n-gram fill contributes, from the instrumented AdaSpark run on the short subset. Rows are the verified nodes past each round's anchor; accepted tokens are the draft tokens committed. A round is widened when its n-gram candidates move the chooser to a wider class than it takes with them priced at zero.}
\label{tab:ngram}
\setlength{\tabcolsep}{4pt}
\begin{tabular}{lrrrr}
\toprule
\rowcolor{head}\textbf{} & \textbf{LFM2.5-2.6B} & \textbf{LFM2.5-8B-A1B} & \textbf{Qwen3-4B} & \textbf{Qwen3-8B} \\
\midrule
verified rows held by n-gram-only nodes & 15.2\% & 9.4\% & 7.2\% & 8.1\% \\
accepted draft tokens from n-gram-only nodes & 15.8\% & 7.8\% & 5.1\% & 7.8\% \\
accepted draft tokens from nodes both sources proposed & 18.1\% & 18.5\% & 22.1\% & 22.4\% \\
acceptance rate, drafter-only nodes & 24\% & 46\% & 33\% & 34\% \\
acceptance rate, nodes both proposed & 70\% & 72\% & 64\% & 70\% \\
acceptance rate, n-gram-only nodes & 30\% & 40\% & 26\% & 37\% \\
rounds with n-gram candidates & 906 & 2,012 & 2,253 & 2,650 \\
of those, rounds the n-gram widened the verify & 10.9\% & 7.5\% & 2.9\% & 4.2\% \\
\bottomrule
\end{tabular}
\end{table*}

On LFM2.5-2.6B (Table~\ref{tab:ngram}), n-gram-only nodes hold 15.2\% of the verified rows and supply 15.8\% of the accepted draft tokens, and nodes both sources proposed supply another 18.1\%. Agreement is strong evidence: an agreed node is accepted 70\% of the time, against 24\% for a drafter-only node. N-gram-only nodes are accepted 30\% of the time; their value is that they continue past the drafter's block when the text repeats. In 10.9\% of the rounds with n-gram candidates they moved the chooser to a wider class, and 65 of the 83 verifies of 24 rows or more were such widenings. On the routed target the n-gram-only shares are smaller and fewer rounds widen, because every extra row costs expert traffic. By source, the n-gram's share of accepted tokens (n-gram-only and agreed) runs from 7.7\% on WildChat to 47.3\% on LongBench on LFM2.5-2.6B, and from 10.5\% to 39.8\% on the routed target. On the Qwen3 targets the n-gram contributes mostly through agreement: agreed nodes supply over a fifth of the accepted draft tokens. One Qwen3-4B reply of this run looped to the 8,192-token cap and is left out; inside it, n-gram-only nodes were accepted 86\% of the time, which is why loops are left out of every statistic.

\subsection{By workload}\label{sec:4.4}

\begin{table}[t]
\centering\small
\caption{The scheduler's gain by workload: AdaSpark over imparo DSpark (same engine, a chain of at most three draft tokens), geometric mean of per-reply decode ratios per source, looped replies excluded. Darker cells are larger gains.}
\label{tab:sources}
\setlength{\tabcolsep}{4pt}
\begin{tabular}{lrrrr}
\toprule
\rowcolor{head}\textbf{source} & \makecell[r]{\bfseries LFM2.5-\\2.6B} & \makecell[r]{\bfseries LFM2.5-\\8B-A1B} & \textbf{Qwen3-4B} & \textbf{Qwen3-8B} \\
\midrule
Spec-Bench & \cellcolor{c1!27}1.51 & \cellcolor{c1!7}1.14 & \cellcolor{c1!14}1.28 & \cellcolor{c1!17}1.33 \\
SPEED-Bench & \cellcolor{c1!27}1.51 & \cellcolor{c1!9}1.19 & \cellcolor{c1!12}1.24 & \cellcolor{c1!16}1.31 \\
WildChat & \cellcolor{c1!23}1.43 & \cellcolor{c1!6}1.12 & \cellcolor{c1!12}1.24 & \cellcolor{c1!17}1.32 \\
ConvFinQA & \cellcolor{c1!38}1.71 & \cellcolor{c1!12}1.24 & \cellcolor{c1!19}1.36 & \cellcolor{c1!21}1.41 \\
ToolACE & \cellcolor{c1!22}1.41 & \cellcolor{c1!9}1.19 & \cellcolor{c1!12}1.23 & \cellcolor{c1!12}1.24 \\
LongBench & \cellcolor{c1!40}1.73 & \cellcolor{c1!6}1.13 & \cellcolor{c1!13}1.26 & \cellcolor{c1!15}1.28 \\
\rowcolor{tint1}\textbf{all replies} & \textbf{1.52} & \textbf{1.17} & \textbf{1.27} & \textbf{1.31} \\
\bottomrule
\end{tabular}
\end{table}

On LFM2.5-2.6B the scheduler's gain is largest where the reply repeats its input (Table~\ref{tab:sources}): 1.73$\times$ on LongBench's documents and 1.71$\times$ on ConvFinQA's reports, against 1.43$\times$ on WildChat's open conversation. Against the ablation's fixed 16-row tree (Table~\ref{tab:ladder}: this target's best width, without the fitted acceptance model and the n-gram candidates) the order is the same: 1.137$\times$ on LongBench, 1.086$\times$ on ConvFinQA and 1.020$\times$ on WildChat. On the routed target the gain varies less by source, because most of it comes from finding the target's width, which helps every source alike. On the Qwen3 targets it is largest on ConvFinQA and smallest on ToolACE.

\subsection{What the chooser picks}\label{sec:4.5}

\begin{figure*}[t]
\centering
\includegraphics[width=0.8\textwidth]{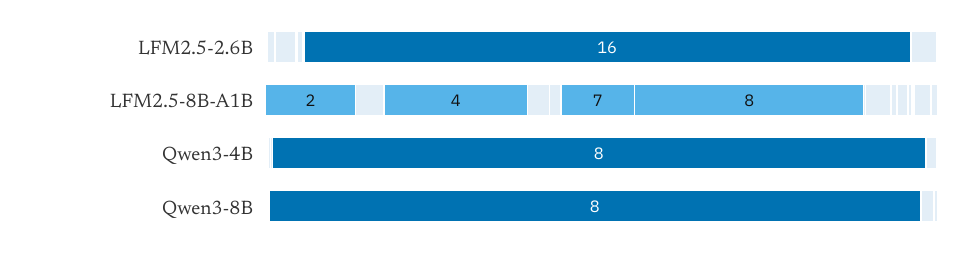}
\caption{Share of rounds at each verified width, per target, from AdaSpark's rounds on the instrumented run over the short subset (setup notes), cold-start rounds excluded. LFM2.5-2.6B: 7 rows 1\%, 8 rows 3\%, 16 rows 90\%, 32 rows 4\%, other widths under 1\% (1,610 rounds); LFM2.5-8B-A1B: 2 rows 13\%, 3 rows 4\%, 4 rows 21\%, 5 rows 3\%, 6 rows 2\%, 7 rows 11\%, 8 rows 34\%, 10 rows 4\%, 12 rows 2\%, 16 rows 2\%, 32 rows 1\%, other widths under 1\% (3,066 rounds); Qwen3-4B: 8 rows 97\%, 16 rows 2\%, other widths under 1\% (3,296 rounds); Qwen3-8B: 8 rows 97\%, 16 rows 2\%, other widths under 1\% (4,094 rounds).}
\label{fig:widths}
\end{figure*}

LFM2.5-2.6B ran 16 rows in 90\% of rounds and 32 rows in 4\% (Figure~\ref{fig:widths}), and the Qwen3 targets 8 rows in 97\%: the best pinned widths of Table~\ref{tab:widths}, reached from an empty store without a sweep. The wider verifies are rare and mostly follow long n-gram candidates (\S\ref{sec:4.3}). On the routed target, where the verify time rises with every row, the chooser ran 2 to 7 rows in 55\% of its rounds and 8 rows in 34\%, moving between them from round to round. The narrower rounds commit fewer tokens in less time, at about the same rate, so the chooser ties the best pinned width, 8 rows.

Table~\ref{tab:classes} compares the learned widths of \S\ref{sec:3.2} with the 8-bit matrix kernel's row classes (every 8 rows up to 64, plus a tile boundary at 47 rows on the 8-bit target) on the evaluation subset, with learned-width runs before and after the kernel-class run on one build. The two tie on every target, within 0.4\% over all replies (0.996--1.002$\times$) and within the intervals in every context band but one: on Qwen3-8B under 2,048 tokens the learned widths are 0.3\% faster (1.003$\times$, interval 1.001--1.006). The learned tables offer most widths from 2 to 16 rows; on the routed target their 2- to 7-row rounds, about half of all rounds, commit 8\% fewer tokens in 8\% less time than the kernel classes' 8 rows: the same rate. What learning buys is that the widths need no declaration from the backend: the kernel classes come from the 8-bit kernel and the 4-bit and routed targets borrow them, while the learned widths follow whatever kernels each target runs. Every other AdaSpark result uses the learned widths.

\begin{table*}[t]
\centering\small
\caption{Learned against kernel width classes on the evaluation subset: AdaSpark's decode speed with learned widths (\S\ref{sec:3.2}) over its speed with the widths taken from the 8-bit matrix kernel's row classes, geometric mean of per-reply ratios with its 95\% bootstrap interval over conversations, for all replies and by the context a reply was decoded at; above 1 the learned widths are faster. Both arms start from an empty store and ran on one build; the learned arm runs before the kernel-class arm and again after it, and its speed on a reply is the mean of the two runs. The last columns give the widths each arm's table offered at the end of its run (setup notes).}
\label{tab:classes}
\setlength{\tabcolsep}{4pt}
\begin{tabularx}{\textwidth}{lrrrXX}
\toprule
\rowcolor{head}\textbf{target} & \textbf{all replies} & \textbf{under 2,048} & \textbf{2,048 and over} & \textbf{learned widths} & \textbf{kernel-class widths} \\
\midrule
LFM2.5-2.6B & \makecell[r]{0.997$\times$\\ {\scriptsize [0.992, 1.001]}} & \makecell[r]{0.995$\times$\\ {\scriptsize [0.990, 1.001]}} & \makecell[r]{1.005$\times$\\ {\scriptsize [0.995, 1.012]}} & 2--4, 6, 8--10, 12--17, 20, 22, 24, 26, 28, 32, 34, 36, 40, 48, 64 & 8, 16, 24, 32, 40, 47, 48, 56, 64 \\
LFM2.5-8B-A1B & \makecell[r]{0.996$\times$\\ {\scriptsize [0.990, 1.003]}} & \makecell[r]{0.994$\times$\\ {\scriptsize [0.987, 1.002]}} & \makecell[r]{1.005$\times$\\ {\scriptsize [0.992, 1.019]}} & 2--17, 20, 24, 28, 32, 64 & 8, 16, 24, 32, 40, 48, 56, 64 \\
Qwen3-4B & \makecell[r]{0.998$\times$\\ {\scriptsize [0.986, 1.008]}} & \makecell[r]{1.003$\times$\\ {\scriptsize [0.997, 1.010]}} & \makecell[r]{0.985$\times$\\ {\scriptsize [0.947, 1.012]}} & 2--18, 20, 24, 28, 30, 32, 44, 57 & 8, 16, 24, 32, 40, 48, 56, 64 \\
Qwen3-8B & \makecell[r]{1.002$\times$\\ {\scriptsize [0.999, 1.006]}} & \makecell[r]{1.003$\times$\\ {\scriptsize [1.001, 1.006]}} & \makecell[r]{0.999$\times$\\ {\scriptsize [0.989, 1.010]}} & 2--17, 20, 22, 24, 26, 28, 30, 32, 57 & 8, 16, 24, 32, 40, 48, 56, 64 \\
\bottomrule
\end{tabularx}
\end{table*}

\subsection{Cost-model accuracy}\label{sec:4.6}

\begin{figure*}[t]
\centering
\includegraphics[width=0.8\textwidth]{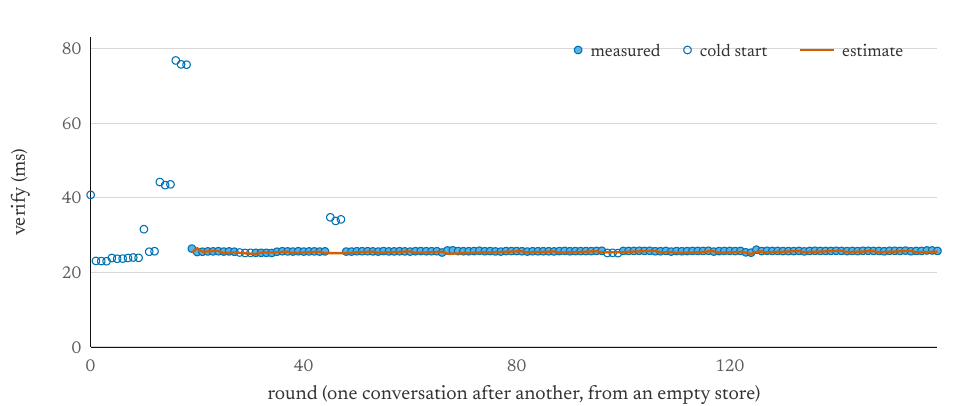}
\caption{LFM2.5-2.6B: measured verify time of each round (dots; hollow dots are cold-start rounds at a width chosen for measurement) and the cost model's estimate for the width it chose (line), from an empty store, over the first 160 rounds. Over all 1,677 rounds, 67 of them cold-start rounds, the median absolute error of the estimate over the second half of the run is 0.9\%.}
\label{fig:learning}
\end{figure*}

From an empty store (Figure~\ref{fig:learning}), the cost model on LFM2.5-2.6B spends 67 rounds of the first request measuring widths, 1.7~s of its 16.5~s of decode, and none in the rest of the run's 1,677 rounds; over the second half of the run its estimate for the chosen width is within 0.9\% of the measured verify time (median). On Qwen3-4B the first request spends 60 rounds (2.5~s of 31.8~s) and the median error is 1.6\%; on Qwen3-8B, 58 rounds (5.0~s of 60.5~s) and 0.8\%. On the routed target the first request spends 55 rounds (1.8~s of 20.3~s), but the median error is 6.9\%. The difference is in the verifies, not the fit: an 8-row routed verify varies with the experts its rows touch, and its median time is 6.5\% above its 15th percentile, against 1.2\% for a 16-row verify on the dense target. Part of that variation is real rather than interference, so the premise of Proposition~\ref{prop:3} holds only in part there, and the low quantile prices the routed verify below its typical time. The gap is similar at 6 rows (7.4\%), so 6 and 8 rows are still priced alike, and the chooser's widths there tie the best pinned width (Table~\ref{tab:widths}).

\subsection{Acceptance calibration and agreement with plain decode}\label{sec:4.7}

\begin{figure*}[t]
\centering
\includegraphics[width=0.8\textwidth]{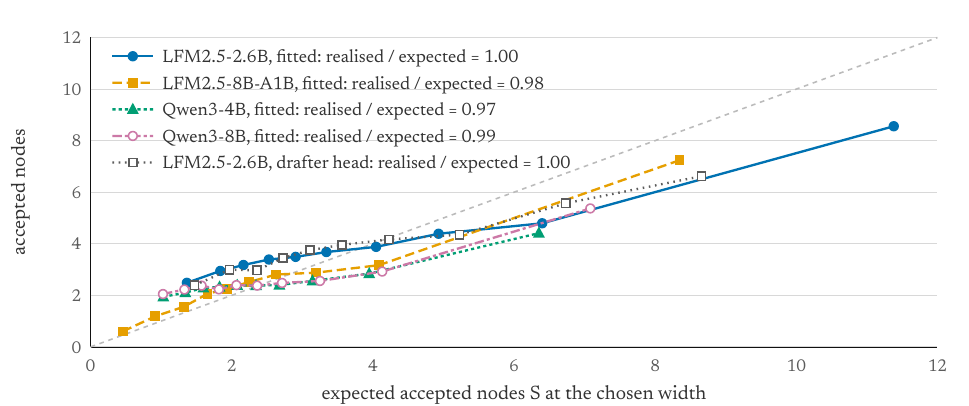}
\caption{Accepted nodes per round against the expected accepted nodes $S$ at the chosen width, in ten equal-count bins of $S$ per series, on the instrumented run over the short subset, over the rounds whose width the chooser set from $S$ (cold-start rounds have none). The dashed diagonal is perfect calibration. Realised over expected accepted nodes: LFM2.5-2.6B, fitted 0.997 (1,460 rounds); LFM2.5-8B-A1B, fitted 0.981 (1,998 rounds); Qwen3-4B, fitted 0.973 (3,208 rounds); Qwen3-8B, fitted 0.986 (3,972 rounds); LFM2.5-2.6B, drafter head 1.004 (1,642 rounds). The drafter-head series is the cost-model arm, which prices the tree with the head-based estimate.}
\label{fig:calibration}
\end{figure*}

Pooled over the rounds of Figure~\ref{fig:calibration}, the fitted model's expectation is within 2\% of what was accepted on both LFM2.5 targets and within 3\% on the Qwen3 targets, and the head-based estimate is within 1\% on LFM2.5-2.6B: in aggregate both are calibrated. Binned by $S$, every series lies above the diagonal at small $S$ and below it at large $S$: the models under-state what their least confident trees accept and over-state what their most confident ones accept. This compression is why the chooser rescales the expected gain of extra rows by a fitted factor (\S\ref{sec:3.4}) rather than taking $S$ at face value. The fitted model's advantage is per node, which is what the order and the width depend on: its log loss is 10.5--15.0\% lower than the head-based estimate's, and lower on every request after the first on all four targets (Table~\ref{tab:accept}, Figure~\ref{fig:accept-curve}).

\begin{table}[t]
\centering\small
\caption{Replies of the evaluation subset whose text, reasoning included, is byte-identical between AdaSpark's first run and imparo's three-pick DSpark. Later turns carry each arm's own earlier replies, so one early difference makes every later turn of that conversation differ; first turns share their history and isolate the verify.}
\label{tab:agree}
\setlength{\tabcolsep}{4pt}
\begin{tabular}{lrr}
\toprule
\rowcolor{head}\textbf{target} & \textbf{all replies} & \textbf{first turns} \\
\midrule
LFM2.5-2.6B & 81/95 & 39/42 \\
LFM2.5-8B-A1B & 73/95 & 34/42 \\
Qwen3-4B & 78/95 & 37/42 \\
Qwen3-8B & 85/95 & 38/42 \\
\bottomrule
\end{tabular}
\end{table}

On the evaluation subset, 81 of AdaSpark's 95 replies on LFM2.5-2.6B are byte-identical to imparo's three-pick DSpark, and 73 to 85 on the other targets (Table~\ref{tab:agree}). Every emitted token is the target's argmax on its verify row (\S\ref{sec:3.5}); we expect the differences to come from near ties in the rounds outside the width-invariant range of \S\ref{sec:3.5} (2-row verifies, wider verifies and cold-start measurements), which accumulate in another order. The routed target's choice of four experts per layer is a further point where a near tie can change the computation; we have not traced the individual divergences. Once a reply differs, the rest of its conversation differs too, since each arm carries its own replies as history.

\section{Discussion}\label{sec:5}

\paragraph{Why learn online.} The quantities a width decision needs vary in three ways. The verify's time depends on the target and its weight format (Figure~\ref{fig:verify-cost}): the same machine and kernel library give a nearly flat 16-row class on one target and a 50\% step at the 9th row on another. Each class's time depends on context, with its own slope (Table~\ref{tab:context-law}). And what a wider tree buys depends on the text being generated (Table~\ref{tab:sources}). A profile taken before serving can capture the first two, but only for the model file, quantisation, device and kernel build it was taken on, and must be retaken whenever one changes; the third changes while serving, with the workload. AdaSpark needs none of this in advance: its cost model starts from a ladder of six widths and its acceptance model from a prior derived from the features' definitions, and the only price is a cold start of 55 to 67 rounds in the first request, 8--10\% of its decode time.

\paragraph{Two learners, two clocks.} The two quantities change for different reasons, so AdaSpark fits them with two models. The verify's time is a property of the model file, the device, the kernels and the context: its law does not change while a model is served, so the cost model converges once per model and device, is stored, and afterwards tracks only the context and the machine's drift (\S\ref{sec:3.2}). Acceptance is a property of the text: it changes with every request and with the workload, so the acceptance model keeps learning, carries a per-request term that adapts within a reply (\S\ref{sec:3.3}), and must recover quickly after a switch. At the 20 changes of source in the evaluation run (five per target), the fitted model's advantage on the first request of a new source is typical of that source: it ranks 0.47 on average among the source's requests and falls below their median at 8 of the 20 (Figure~\ref{fig:accept-curve}). What differs most between workloads, the n-gram evidence and the context, enters the model as features rather than as something it must relearn, and no detector of workload changes was needed on these workloads. One learner would force one rate on both: either the cost would be relearned from noise every request, or acceptance would lag a workload change by as long as the cost takes to converge.

\paragraph{Engine and scheduler.} The mix of the two factors depends on the target (Figure~\ref{fig:e2e}). On the Qwen3 targets most of the gain over llama.cpp is in the engine's speculative path, since the two engines' autoregressive decode is at parity there. The scheduler's factor is largest on the dense LFM2.5 target (1.52$\times$), whose verify is nearly flat to 16 rows, so a wide tree and n-gram candidates cost little, and smallest on the routed target (1.17$\times$), where every row costs expert traffic. The Qwen3 targets sit between (1.27$\times$ and 1.31$\times$): their 4-bit verify steps at the 9th row, so the chooser stays at 8 rows.

\paragraph{Against the best pinned width.} A width tuned by a sweep exists only after the sweep, which must be repeated for every target and weight format; AdaSpark reaches it without one. Where one width is right in almost every round, as on the dense targets, a tie with the sweep is the expected result. On the routed target, moving between widths from round to round also ties the best single width rather than beating it. A wrong pinned width is expensive: 12 rows costs 20--23\% on the Qwen3 targets, and on the routed target 16 rows, the dense LFM2.5 target's best width, costs 10\% (Table~\ref{tab:widths}).

\section{Related work}\label{sec:6}

Table~\ref{tab:sizing} summarises how schedulers for speculative decoding size a verify: whether the size is chosen anew for each verify, and where the two quantities the choice needs come from.

\begin{table*}[t]
\centering\small
\caption{How schedulers size a verify. ``Per round'' means the size is chosen anew for each verify from the listed estimates. Entries are taken from each paper or, for the engines, from their source code (\S\ref{sec:6}).}
\label{tab:sizing}
\setlength{\tabcolsep}{4pt}
\begin{tabularx}{\textwidth}{lXXX}
\toprule
\rowcolor{head}\textbf{method} & \textbf{size chosen per round} & \textbf{verify cost from} & \textbf{acceptance from} \\
\midrule
\textbf{AdaSpark} & yes: tree width, from widths it learns, 2--64 rows & fitted online from served rounds, per learned width, as a function of context & fitted online per node, with terms for n-gram candidates \\
CAST \cite{hong2026inference} & yes: per step, by a cost-benefit test with thresholds set by hand per model & latency table over batch size, context and tokens fed, profiled per device before serving & drafter probabilities \\
EVICT \cite{pan2026making} & yes: tree prefix, 1--32 tokens & table profiled at startup per verify length, no context term & drafter probabilities \\
Bastion \cite{oh2026bastion} & yes: grows the tree until the estimated speedup falls & roofline model in width and context, calibrated offline, optional online scale factor & drafter probabilities \\
TreeSpark \cite{zhou2026treespark} & yes: stops at a price per verified token & latency ladder measured once per deployment & two-parameter map fitted offline to target outcomes \\
DSpark \cite{cheng2026dspark}, vLLM \cite{kwon2023efficient}, SGLang \cite{zheng2024sglang} & yes: a chain length per request under a batch budget & table profiled at engine start \cite{cheng2026dspark}; at startup at 8,192 tokens \cite{kwon2023efficient}; offline, at contexts under about 600 tokens \cite{zheng2024sglang} & confidence head, calibrated offline \cite{cheng2026dspark} \\
TurboSpec \cite{liu2024turbospec} & yes: chain length & linear model fitted offline & running acceptance rate, online \\
Cascade \cite{saxena2025utility} & yes: chain length, re-tested periodically & measured online in short trial phases, no model & measured together with the cost \\
EAGLE-2 \cite{li2024eagleb}, OPT-Tree \cite{wang2025opt}, TALON \cite{liu2026talon} & tree shape adapts; node budget fixed & not modelled & drafter probabilities or confidence \\
PCTree \cite{li2026chains}, Trees from Marginals \cite{oda2026trees} & no: size fixed, found by a sweep & not modelled & drafter probabilities \\
Graft \cite{shen2026draft} & no: 60 nodes; the split between drafted and retrieved nodes changes by confidence stage & not modelled & drafter probabilities; retrieved nodes not priced \\
llama.cpp DSpark \cite{gerganovndllama} & no: a chain of up to 3 tokens by default & not modelled & optional cut-off on the confidence head, off by default \\
\bottomrule
\end{tabularx}
\end{table*}

\paragraph{Tree size from a cost.} CAST \cite{hong2026inference} sizes a speculative tree from a latency table over batch size, context and tokens fed, measured per device before serving, with thresholds set by hand per model. Sequoia \cite{chen2024sequoia} also profiles the hardware to size trees, and CaDDTree \cite{zhang2026cost} sizes diffusion draft trees from a cost curve fitted offline once per model and device; DDTree \cite{ringel2026accelerating} builds draft trees from a block-diffusion drafter. Three recent schedulers choose the tree size each round from a cost and the prefix property of Proposition~\ref{prop:1}, two by the ratio of expected accepted tokens to cost and one by a price per verified token. EVICT \cite{pan2026making} does this for EAGLE-3 trees on mixture-of-experts targets, with acceptance from the draft probabilities and a step-latency table profiled at initialisation per verify length, with no context term. Bastion \cite{oh2026bastion} does it for block-diffusion drafters with a roofline model in width and context, calibrated offline per GPU and target and correctable online by one multiplicative factor; its stopping rule assumes a cost convex in the width. TreeSpark \cite{zhou2026treespark} builds DSpark trees from parent-conditioned candidates and stops at a price read from a latency ladder measured once per deployment. TurboSpec \cite{liu2024turbospec} and ASPIRE \cite{ziashahabi2026aspire} fit linear cost models offline, and D-cut \cite{liu2026d} prunes verify depth from a cost table profiled at startup. Cascade \cite{saxena2025utility} measures the utility of a few draft lengths online in short test phases, without a cost model, and AdaptiveSD \cite{saremi2026adaptivesd}, BanditSpec \cite{hou2025banditspec} and Sun et~al. \cite{sun2026delay} learn a chain's length online from measured outcomes. Other tree methods fix the budget: PCTree \cite{li2026chains} and Trees from Marginals \cite{oda2026trees} at a size found by a sweep, TALON \cite{liu2026talon} with a hand-set threshold, and ECHO \cite{hu2026echo} and Graft \cite{shen2026draft} with thresholds set in a warm-up run. EAGLE-2 \cite{li2024eagleb} and OPT-Tree \cite{wang2025opt} select a dynamic tree's nodes by the product of draft probabilities along the path, the rule of Proposition~\ref{prop:1}, under a fixed node budget. AdaSpark differs from all of these in fitting the verify cost while serving, for widths it learns and as a function of context, with no profile or calibration before serving, and in pricing time at the long-run rate over a cost with steps.

\paragraph{DSpark in serving engines.} vLLM \cite{kwon2023efficient} and SGLang \cite{zheng2024sglang} each ship a DSpark scheduler (Table~\ref{tab:vllm}). Both size a batch-wide verify budget by the ratio of expected accepted tokens to the cost of a step, and both take expected acceptance from the drafter's confidence head. vLLM's adaptive verification (\texttt{enable\_adaptive\_verification}, off by default) profiles its table at startup at one context length, 8,192 tokens by default, and makes it monotone by a running maximum; the table is a step function because CUDA-graph capture pads a batch to the next captured size. SGLang reads a table written by a separate offline profiler, which by default decodes 16-token prompts for about 570 steps, so its contexts stay below about 600 tokens; without that file it verifies every drafted token. Neither refits its table while serving.

\begin{table*}[t]
\centering\small
\caption{DSpark schedulers in vLLM (read from vLLM commit 41c4a3ed4e) and SGLang (read from SGLang commit 8b2ca8ecc2), and AdaSpark.}
\label{tab:vllm}
\setlength{\tabcolsep}{4pt}
\begin{tabularx}{\textwidth}{XXXX}
\toprule
\rowcolor{head}\textbf{} & \textbf{vLLM adaptive verification} & \textbf{SGLang DSpark scheduler} & \textbf{AdaSpark} \\
\midrule
objective & expected tokens / cost & expected tokens $\times$ steps per second & expected tokens $-$ $\rho$ $\cdot$ time (Eq.~7) \\
source of cost & profiled at startup & offline profiler, loaded from a file & fitted from served rounds (\S\ref{sec:3.2}) \\
context & one length, 8,192 tokens by default & 16-token prompts decoded for about 570 steps (contexts up to about 600 tokens) & per-class law in context, refitted while serving \\
where the table is not monotone & running maximum over narrower widths: a cheaper wide width is priced up and never used & lookup at the nearest profiled size below & minimum over wider classes: a cheaper wide class is used for the narrower widths \\
source of acceptance & drafter confidence head & drafter confidence head, optional offline calibration & fitted multinomial (\S\ref{sec:3.3}) \\
candidate sources per verify & one & one & two, priced separately (\S\ref{sec:3.3}) \\
\bottomrule
\end{tabularx}
\end{table*}

\paragraph{Predicting acceptance.} SpecDec++ \cite{huang2025specdec} trains an acceptance-prediction head onto the draft model and stops drafting when the predicted rejection probability crosses a threshold. DSpark \cite{cheng2026dspark} trains its confidence head on the per-position acceptance rate, calibrates it offline by temperature scaling, and passes the calibrated values to its scheduler. TreeSpark \cite{zhou2026treespark} reports that this head ranks tree edges poorly and replaces it with a two-parameter map from the drafter's probability, fitted once to the target's outcomes; TAPS \cite{wang2026taps} trains a scorer over each parent's children offline from recorded verifications; C2T \cite{huo2025c2t} trains a classifier on accepted and rejected tree nodes; and LiLiCorr \cite{rusanovsky2026lilicorr} trains a head on the drafter's confidence features. TurboSpec \cite{liu2024turbospec} and ASPIRE \cite{ziashahabi2026aspire} track an acceptance rate online as a running average, and BanditSpec \cite{hou2025banditspec} learns the draft length online as a bandit problem. D-PACE \cite{wu2026d} trains a parallel drafter with per-position loss weights derived from a surrogate of the expected accepted length. Our measurements (\S\ref{sec:3.3}, Table~\ref{tab:accept}) agree with TreeSpark's that a model fitted to the target's outcomes predicts acceptance better than the head; AdaSpark fits such a per-node model online, from an empty store, with separate terms for n-gram candidates and for agreement between the sources.

\paragraph{Text-derived candidates.} Prompt lookup decoding \cite{saxena2023prompt} matches n-grams against the prompt, REST \cite{he2024rest} retrieves continuations from a datastore, ANPD \cite{ou2024lossless} adapts the n-gram order online, the N-Grammys \cite{stewart2024n} batch several learning-free proposers, PLD+ \cite{somasundaram2025pld} and LogitSpec \cite{liu2026logitspec} refine which candidate to take, and SuffixDecoding \cite{oliaro2025suffixdecoding} and SAM Decoding \cite{hu2025sam} match suffixes of the prompt and of earlier outputs. These methods use the text as their only source or switch between a text source and a model drafter at each step: SuffixDecoding's hybrid mode and Arctic Inference \cite{snowflakendarctic} take the suffix draft when its score passes a threshold, TensorRT-LLM \cite{nvidiandtensorrt} lets a suffix-automaton draft override the neural one above a match length, SAM Decoding selects by match length, and \cite{eisenstadt2026copy} decides with a probe on the target's hidden states. DART \cite{liu2026dart} uses an n-gram model built offline to rerank a drafter's candidates without adding its own. Two methods we know of verify a model drafter's and retrieved candidates in one tree: RASD \cite{quan2025rasd} merges an EAGLE-2 tree with a retrieval tree, and Graft \cite{shen2026draft} prunes an EAGLE-3 tree at confidence checkpoints and fills the freed slots of a fixed 60-node budget from a token-successor table, following a fixed template per stage. Both divide the rows between the sources by fixed rules. In AdaSpark one fitted model prices both sources, so an n-gram candidate takes a row only from a candidate the model ranks below it, and widens the verify only when the cost model says its expected tokens pay for it. SpecInfer \cite{miao2024specinfer} merges the token trees of several small draft models into one verify. Among serving engines, llama.cpp uses the first speculator that returns a draft in each round, and the released versions of vLLM and SGLang construct one proposer per server.

\paragraph{Verify cost on mixture-of-experts targets.} MoESD \cite{huang2025moesd} models the verify cost of a mixture-of-experts target by the expected number of distinct experts the verified tokens activate. Cascade \cite{saxena2025utility} measures verify time growing two- to three-fold with draft length and chooses the length online by its utility, EVICT \cite{pan2026making} motivates its truncation by the same growth, MoE-Spec \cite{mcdanel2026moe} budgets the experts a verify loads, and \cite{amankulov2026limits} bounds the speculation budget offline with an oracle. Our routed target shows the same growth from 2 to 16 rows; we add its effect on the width choice beside dense targets on the same kernels, where the best width differs (\S\ref{sec:3.4}).

\paragraph{Other work on block drafters.} Work on the drafter itself is complementary to a scheduler: DeLS-Spec \cite{zheng2026dels} separates long and short context in the drafter, \cite{qiang2026beyond} analyses what a block drafter can and cannot predict at each position, Carryover Drafting \cite{koo2026carryover} reuses the states of rejected drafts, and \cite{liu2026when} runs the drafter's forward pass concurrently with the verify, which would shrink AdaSpark's fixed per-round cost $D$.

\section{Conclusion}\label{sec:7}

The time a tree verify takes is not a smooth function of its width. On the kernels we measured it steps at places that depend on the target and its weight format, each class's time grows with context at its own rate, and on a mixture-of-experts target the time rises with every row. What a wider tree buys depends on the drafter, the target and the text. A fixed width, or a width read from a table profiled before serving, fixes these quantities in advance.

AdaSpark fits both quantities from the rounds it serves: the widths worth offering and a low-quantile law in context for each one's verify time, and a multinomial over each parent's children for acceptance. The same model prices n-gram continuations of the request, so they join the drafter's tree and take rows only when they are worth them. Starting from an empty store, with constants set once on one target, it reaches the width a sweep finds best on each dense target, and it ties the best pinned width on the routed target, where it moves between widths from round to round. Against llama.cpp's DSpark with the same drafter files it decodes 1.52$\times$ to 3.10$\times$ faster over the whole test set; on the evaluation subset, with the engine held fixed, it is 1.17$\times$ to 1.52$\times$ faster than llama.cpp's default three-token chain. The questions it leaves open are a second device, sampling rather than greedy decoding, and verify batches shared between concurrent requests.

\bibliographystyle{unsrt}
\bibliography{refs}

\appendix

\section{Test set}\label{app:A}

Every prompt in the end-to-end evaluation comes from a published dataset; we wrote no prompt text. The protocol is ours: multi-turn conversations over each engine's HTTP API, with each engine's own replies carried as history. A method that learns from its workload is easy to favour with a workload chosen for it, so the text is fixed by the rule below and can be rebuilt by anyone.

\begin{table*}[t]
\centering\small
\caption{The test set: six public sources, 73 conversations, 148 turns, and the evaluation subset drawn from it. Licences as each source states them. The last column lists every change made to source text.}
\label{tab:testset}
\setlength{\tabcolsep}{4pt}
\begin{tabularx}{\textwidth}{XrrrrrX}
\toprule
\rowcolor{head}\textbf{source and content} & \textbf{licence} & \textbf{categories} & \textbf{conv.} & \textbf{turns} & \textbf{subset turns} & \textbf{change to the text} \\
\midrule
Spec-Bench \cite{xia2024unlocking}\newline MT-Bench two-turn items, translation, summarisation, QA, math reasoning, RAG & Apache-2.0 & 13 & 18 & 26 & 21 & none \\
SPEED-Bench \cite{abramovich2026speed}\newline qualitative split & source's own & 11 & 19 & 23 & 15 & none \\
WildChat \cite{zhao2024wildchat}\newline real user conversations; user turns only & ODC-BY & 1 & 8 & 32 & 16 & cut to the first four user turns \\
ConvFinQA \cite{chen2022convfinqa,xie2023pixiu}\newline several questions about one earnings report & not stated & 1 & 8 & 29 & 16 & later turns reduced to their own question \\
ToolACE \cite{liu2025toolace}\newline tool use with the source's schemas and recorded tool results & Apache-2.0 & 1 & 8 & 26 & 13 & Python-style parameter types (\texttt{dict}, \texttt{float}, \texttt{int}) renamed to the JSON Schema types they name \\
LongBench \cite{bai2024longbench}\newline long documents with the benchmark's own task prompt & MIT & 6 & 12 & 12 & 6 & a document over the cap cut from the middle \\
\bottomrule
\end{tabularx}
\par\smallskip\parbox{\linewidth}{\footnotesize The FLARE release of ConvFinQA repeats the report and the dialogue history in every turn; the harness carries the history itself, so later turns keep only their own question. LongBench's middle cut is the benchmark's own rule for inputs longer than a model's window.}
\end{table*}

\paragraph{Selection.} Items are taken in each dataset's own order: the first that pass one length cap (4,000 characters of prompt; 12,000 for ConvFinQA, ToolACE and LongBench, whose items carry a report, tool schemas or a document). A LongBench document over its cap is cut from the middle rather than skipped. The benchmark sets take a category's items until the category holds at least two turns, so a multi-turn item can take it to three. The single-category sets take their first eight conversations that pass the dataset's own filters (for WildChat: English, not flagged toxic, not redacted, at least four user turns). There is no sampling, and no item was examined for how any engine performs on it. SPEED-Bench's qualitative split reuses some Spec-Bench questions and repeats some of its own first turns; the rule keeps every copy it selects, and two conversations of the evaluation subset (humanities and QA) open with the same question as a Spec-Bench conversation. AdaSpark's n-gram index holds only the current request's text, so a repeated question gains nothing from an earlier copy. The conversations' text was fixed and committed before the first end-to-end run; a conversation's hash covers its turns, and ToolACE's tool schemas are converted by the rule in Table~\ref{tab:testset}. Each conversation records its source row identifier and a hash of its text, and each file records the source revision it was built from; rebuilding the set six days after the first build reproduced all 73 hashes.

\paragraph{Evaluation subset.} Replies with reasoning average about 2,000 tokens, so one pass of one arm over all 148 turns takes over an hour on the smallest target and many hours on the largest. The end-to-end runs therefore use a subset fixed by rule before any run: the first conversation of every category of a multi-category source, and the first half of the conversations of a single-category source. It keeps every category, 42 conversations and 87 of the 148 turns (Table~\ref{tab:testset}); a tool turn gives two timed replies, so the subset has 95. The other 31 conversations and 61 turns were then run with llama.cpp's DSpark and AdaSpark as a held-out check (\S\ref{sec:4.1}).

\paragraph{Departures from the sources' protocols.} Replies run to their end, up to 8,192 tokens, rather than to Spec-Bench's cap of 1,024, because a reasoning model's reply would be cut inside its reasoning at that length. Decoding is greedy, which this verifier requires. Under greedy decoding a reasoning model occasionally repeats itself until the cap; Qwen3's documentation warns of this for its reasoning mode. Such a loop is trivially predictable text that favours every speculative method, so a reply that reaches the cap is left out of every speed ratio involving an arm in which it did, and counted separately. In Qwen3-8B's three-token chain arm the machine slept during one conversation; its five replies were run again on the same binary, with byte-identical text, and the new timings replace the old. Reasoning is on for every target: LFM2.5's chat template has no switch for it, and Qwen3 is run with it on. A tool turn gives two timed replies, the call and the answer after the recorded result is returned; when the model answers without calling, the second reply does not run.

\section{Constants}\label{app:B}

Table~\ref{tab:constants} lists every constant the scheduler uses, with its value and how it was set. ``Hand'' means chosen once from what the quantity means, not by a search; ``dev'' means chosen by comparing a few values on the development corpus (setup notes). All were set while developing on LFM2.5-2.6B and are used unchanged on the four targets.

\begin{table*}[t]
\centering\small
\caption{The scheduler's constants.}
\label{tab:constants}
\setlength{\tabcolsep}{4pt}
\begin{tabularx}{\textwidth}{lXXl}
\toprule
\rowcolor{head}\textbf{part} & \textbf{constant} & \textbf{value} & \textbf{set by} \\
\midrule
cost model (\S\ref{sec:3.2}) & quantile level $\tau$ & 0.15 & hand (Proposition~\ref{prop:3}) \\
 & context basis unit & 8,192 tokens & hand \\
 & L1 and L2 weights $\lambda_{1}$, $\lambda_{2}$ (on time normalised by the class's scale) & 0.002, 0.01 & hand \\
 & FTRL rate and stabiliser & 0.05, 1 & hand \\
 & squared-gradient cap (rate floor) & 400 & hand \\
 & sample bank per context band; samples a refit is judged on & 16 samples; the newest quarter of a band, all of a band with fewer than four & hand \\
 & law clamp, relative to the class's lowest time & [0.5, 3] & hand \\
 & level window; level clamp & 8 verifies; [0.5, 2] & hand \\
 & measurements before a width is priced; samples before its law is fitted & 3; 3 & hand \\
 & starting ladder & 2, 4, 8, 16, 32, 64 rows & hand \\
 & rounds the chooser holds a width before a split; steep interval & 8; a rise above 25\% & hand \\
 & most widths learned & 24 & hand \\
width choice (\S\ref{sec:3.4}) & long-run rate half-life; rounds before a band's rate is used & 64 rounds; 8 & hand \\
 & gain clamp; shrinkage toward 1 & [0.25, 4]; ridge weight 0.02 times each regressor's sum of squares & hand \\
 & gain reference width $n_{0}$ & 8 rows & hand \\
 & ratio margin before the band's rate is known & 4\% & hand \\
 & hysteresis margin & 0.04 of the incumbent's expected tokens & dev \\
acceptance model (\S\ref{sec:3.3}) & drafter and depth/context terms: FTRL rate, stabiliser, L2 & 0.05, 1, 0.01 & hand \\
 & n-gram bucket FTRL rate & 0.5 & hand \\
 & gate on leaving the prior & $3\sqrt{m}$ after $m$ updates & hand \\
 & squared-gradient cap & 400 & hand \\
 & request intercept: AdaGrad rate, band, squared-gradient cap & 0.15, $\pm$2, 16 & hand \\
 & guard half-life & 256 labelled parents & hand \\
 & head-based estimate's per-request offset: AdaGrad rate, band & 0.15, $\pm$2 & hand \\
n-gram fill (\S\ref{sec:3.3}) & follower context when no earlier passage continues & last 3 tokens & hand \\
 & chain depth limit & 32 tokens below the anchor & hand \\
 & exploration period & every 16th tree round with n-gram candidates & dev \\
drafter (\S\ref{sec:2.2}) & candidates kept per position $K$ & 8 & hand \\
\bottomrule
\end{tabularx}
\end{table*}

\section{Alternatives we measured and rejected}\label{app:C}

Table~\ref{tab:rejected} lists the design alternatives we built and measured before settling on the form of \S\ref{sec:3}, what went wrong with each, and the measurement that showed it.

\begin{table*}[t]
\centering\small
\caption{Design alternatives that were built and measured before the form of \S\ref{sec:3}, all measured on LFM2.5-2.6B during development. Changes are in milliseconds per token; positive is slower.}
\label{tab:rejected}
\setlength{\tabcolsep}{4pt}
\begin{tabularx}{\textwidth}{XXX}
\toprule
\rowcolor{head}\textbf{alternative} & \textbf{what went wrong} & \textbf{measured} \\
\midrule
Per-round ratio without the fixed cost $D$ & narrow rounds looked cheap & +2.2\% against a fixed width of 16 (1,596-token context); $-$0.8\% with $D$ \\
Argmax of the value with no margin & the choice flipped on noise near the boundary & +1.70\% / $-$0.95\% against the best fixed width at two contexts; $-$0.40\% / $-$0.42\% with the margin \\
One acceptance curve per width, learned from rounds at that width & a round at $n$ rows says nothing about wider trees, so the learner never left its start & stayed at its initial width in 56 of 56 rounds from 8 rows and 47 of 47 from 16 \\
Drafter candidates priced without the contradicted term & picks next to a strong n-gram proposal were over-priced; the drafter's gain fell to 0.5 and trees stayed at 8 rows & contradicted picks: 0.598 predicted, 0.539 accepted; their siblings 0.041 and 0.012. With the term, picks contradicted by a copy of $ \geq 16$ tokens: 0.129 predicted, 0.147 accepted \\
Agreed candidates starting from the drafter's own estimate & agreement was not counted & agreed picks: 0.865 predicted, 0.970 accepted \\
Monotone cost table (running maximum over narrower widths) & a cheaper wide class can never be chosen & by construction; 48 rows are 4\% cheaper than 40 (\S\ref{sec:3.2}) \\
\bottomrule
\end{tabularx}
\end{table*}

\end{document}